\newcommand{\defeq}{\coloneqq}
\newtheorem{theorem}{Theorem}
\newtheorem*{theorem*}{Theorem}
\newtheorem*{corollary*}{Corollary}
\newcommand{\beginsupplement}{
\setcounter{table}{0}
\renewcommand{\thetable}{S\arabic{table}}
\setcounter{figure}{0}
\renewcommand{\thefigure}{S\arabic{figure}}
}
\renewenvironment{proof}[1][\proofname]{\par
  \vspace{-\topsep}
  \pushQED{\qed}%
  \normalfont
  \topsep0pt \partopsep0pt 
  \trivlist
  \item[\hskip\labelsep
        \itshape
    #1\@addpunct{.}]\ignorespaces
}{%
  \popQED\endtrivlist\@endpefalse
  \addvspace{0pt plus 0pt} 
}
\title{Frank-Wolfe Bayesian Quadrature: Probabilistic Integration with Theoretical Guarantees}
\author{
Fran\c{c}ois-Xavier~Briol\\
Department of Statistics\\
University of Warwick\\
\texttt{f-x.briol@warwick.ac.uk} \\
\And
Chris~J.~Oates \\
School of Mathematical and Physical Sciences \\
University of Technology, Sydney \\
\texttt{christopher.oates@uts.edu.au} \\
\And
Mark~Girolami \\
Department of Statistics \\
University of Warwick \\
\& The Alan Turing Institute for Data Science \\
\texttt{m.girolami@warwick.ac.uk} \\
\And
Michael~A.~Osborne \\
Department of Engineering Science \\
University of Oxford \\
\texttt{mosb@robots.ox.ac.uk} \\
}
\begin{document}

\maketitle

\begin{abstract}
There is renewed interest in formulating integration as a statistical inference problem, motivated by obtaining a full distribution over numerical error that can be propagated through subsequent computation.
Current methods, such as Bayesian Quadrature, demonstrate impressive empirical performance but lack theoretical analysis.
An important challenge is therefore to reconcile these probabilistic integrators with rigorous convergence guarantees.
In this paper, we present the first probabilistic integrator that admits such theoretical treatment, called Frank-Wolfe Bayesian Quadrature (FWBQ).
Under FWBQ, convergence to the true value of the integral is shown to be up to exponential and posterior contraction rates are proven to be up to super-exponential.
In simulations, FWBQ is competitive with state-of-the-art methods and out-performs alternatives based on Frank-Wolfe optimisation.
Our approach is applied to successfully quantify numerical error in the solution to a challenging Bayesian model choice problem in cellular biology.
\end{abstract}


\section{Introduction}

Computing integrals is a core challenge in machine learning and numerical methods play a central role in this area. 
This can be problematic when a numerical integration routine is repeatedly called, maybe millions of times, within a larger computational pipeline.
In such situations, the cumulative impact of numerical errors can be unclear, especially in cases where the error has a non-trivial structural component.
One solution is to model the numerical error statistically and to propagate this source of uncertainty through subsequent computations.
Conversely, an understanding of how errors arise and propagate can enable the efficient focusing of computational resources upon the most challenging numerical integrals in a pipeline.

Classical numerical integration schemes do not account for prior information on the integrand and, as a consequence, can require an excessive number of function evaluations to obtain a prescribed level of accuracy \cite{OHagan1984}.
Alternatives such as Quasi-Monte Carlo (QMC) can exploit knowledge on the smoothness of the integrand to obtain optimal convergence rates \cite{Dick2010}.
However these optimal rates can only hold on sub-sequences of sample sizes $n$, a consequence of the fact that all function evaluations are weighted equally in the estimator \cite{Owen2014}.
A modern approach that avoids this problem is to consider arbitrarily weighted combinations of function values; the so-called \textit{quadrature rules} (also called cubature rules). 
Whilst quadrature rules with non-equal weights have received comparatively little theoretical attention, it is known that the extra flexibility given by arbitrary weights can lead to extremely accurate approximations in many settings (see applications to image de-noising \cite{Chen2015} and mental simulation in psychology \cite{Hamrick2013mental}).

Probabilistic numerics, introduced in the seminal paper of \cite{Diaconis1988}, aims at re-interpreting numerical tasks as inference tasks that are amenable to statistical analysis.\footnote{A detailed discussion on probabilistic numerics and an extensive up-to-date bibliography can be found at \url{http://www.probabilistic-numerics.org}.}
Recent developments include probabilistic solvers for linear systems \cite{Hennig2015solvers} and differential equations \cite{Conrad2015,Schober2014}. For the task of computing integrals, Bayesian Quadrature (BQ) \cite{OHagan1991} and more recent work by \cite{Oates2015} provide probabilistic numerics methods that produce a full posterior distribution on the output of numerical schemes. 
One advantage of this approach is that we can propagate uncertainty through all subsequent computations to explicitly model the impact of numerical error \cite{Hennig2015ProbNum}. 
Contrast this with chaining together classical error bounds; the result in such cases will typically be a weak bound that provides no insight into the error structure.
At present, a significant shortcoming of these methods is the absence of theoretical results relating to rates of posterior contraction. This is unsatisfying and has likely hindered the adoption of probabilistic approaches to integration, since it is not clear that the induced posteriors represent a sensible quantification of the numerical error (by classical, frequentist standards).

This paper establishes convergence rates for a new  probabilistic approach to integration.
Our results thus overcome a key perceived weakness associated with probabilistic numerics in the quadrature setting.
Our starting point is recent work by \cite{Bach2012}, who cast the design of quadrature rules as a problem in convex optimisation that can be solved using the Frank-Wolfe (FW) algorithm.
We propose a hybrid approach of \cite{Bach2012} with BQ, taking the form of a quadrature rule, that (i) carries a full probabilistic interpretation, (ii) is amenable to rigorous theoretical analysis, and (iii) converges orders-of-magnitude faster, empirically, compared with the original approaches in \cite{Bach2012}.
In particular, we prove that super-exponential rates hold for posterior contraction (concentration of the posterior probability mass on the true value of the integral), showing that the posterior distribution provides a sensible and effective quantification of the uncertainty arising from numerical error. 
The methodology is explored in simulations and also applied to a challenging model selection problem from cellular biology, where numerical error could lead to mis-allocation of expensive resources.


\section{Background} \label{section:sigmapoint}

\subsection{Quadrature and Cubature Methods}

Let $\mathcal{X} \subseteq \mathbb{R}^d$ be a measurable space such that $d \in \mathbb{N}_{+}$ and consider a probability density $p(x)$ defined with respect to the Lebesgue measure on $\mathcal{X}$.
This paper focuses on computing integrals of the form $\int f(x) p(x) \mathrm{d}x$ for a test function $f:\mathcal{X} \rightarrow \mathbb{R}$ where, for simplicity, we assume $f$ is square-integrable with respect to $p(x)$.
A \textit{quadrature rule} approximates such integrals as a weighted sum of function values at some design points $\{x_i\}_{i=1}^n \subset \mathcal{X}$:
\begin{equation}
\int_\mathcal{X} f(x) p(x) \mathrm{d}x \approx \sum_{i=1}^n w_i f(x_i).
\end{equation}
Viewing integrals as projections, we write $p[f]$ for the left-hand side and $\hat{p}[f]$ for the right-hand side, where $\hat{p} = \sum_{i=1}^n w_i \delta(x_i)$ and $\delta(x_i)$ is a Dirac measure at $x_i$. 
Note that $\hat{p}$ may not be a probability distribution; in fact, weights $\{w_i\}_{i=1}^n$ do not have to sum to one or be non-negative.
Quadrature rules can be extended to multivariate functions $f:\mathcal{X} \rightarrow \mathbb{R}^d$ by taking each component in turn.

There are many ways of choosing combinations $\{x_i,w_i\}_{i=1}^n$ in the literature. For example, taking weights to be $w_i = 1/n$ with points $\{x_i\}_{i=1}^n$ drawn independently from the probability distribution $p(x)$ recovers basic Monte Carlo integration. 
The case with weights $w_i= 1/n$, but with points chosen with respect to some specific (possibly deterministic) schemes includes kernel herding \cite{Chen2010} and Quasi-Monte Carlo (QMC) \cite{Dick2010}. In Bayesian Quadrature, the points $\{x_i\}_{i=1}^n$ are chosen to minimise a posterior variance, with weights $\{w_i\}_{i=1}^n$ arising from a posterior probability distribution.

Classical error analysis for quadrature rules is naturally couched in terms of minimising the worst-case estimation error. Let $\mathcal{H}$ be a Hilbert space of functions $f: \mathcal{X}\rightarrow \mathbb{R}$, equipped with the inner product $\langle \cdot,\cdot \rangle_{\mathcal{H}}$ and associated norm $\|\cdot\|_{\mathcal{H}}$. 
We define the \textit{maximum mean discrepancy} (MMD) as:
\begin{equation}
\text{MMD}\big(\{x_i,w_i\}_{i=1}^n \big) \defeq  \sup_{\substack{f \in \mathcal{H}: \|f\|_{\mathcal{H}}=1}} \big|p[f] - \hat{p}[f] \big|.
\end{equation}
The reader can refer to \cite{Sriperumbudur2009} for conditions on $\mathcal{H}$ that are needed for the existence of the MMD. The rate at which the MMD decreases with the number of samples $n$ is referred to as the `convergence rate' of the quadrature rule. 
For Monte Carlo, the MMD decreases with the slow rate of $\mathcal{O}_P(n^{-1/2})$ (where the subscript $P$ specifies that the convergence is in probability). Let $\mathcal{H}$ be a RKHS with reproducing kernel $k: \mathcal{X}\times \mathcal{X} \rightarrow \mathbb{R}$ and denote the corresponding canonical feature map by $\Phi(x) = k(\cdot,x)$, so that the mean element is given by $\mu_p(x) = p[\Phi(x)] \in \mathcal{H}$. 
Then, following \cite{Sriperumbudur2009}
\begin{equation}
\text{MMD}\big(\{x_i,w_i\}_{i=1}^n \big) =  \| \mu_p - \mu_{\hat{p}} \|_{\mathcal{H}}.
\end{equation}
This shows that to obtain low integration error in the RKHS $\mathcal{H}$, one only needs to obtain a good approximation of its mean element $\mu_p$ (as $\forall f \in \mathcal{H}$: $p[f] = \langle f , \mu_p \rangle_{\mathcal{H}}$). 
Establishing theoretical results for such quadrature rules is an active area of research \cite{Bach2015}.


\subsection{Bayesian Quadrature} \label{subsec:BQ}

Bayesian Quadrature (BQ) was originally introduced in \cite{OHagan1991} and later revisited by \cite{Rasmussen2003,Gunter2014} and \cite{Osborne2012}. 
The main idea is to place a functional prior on the integrand $f$, then update this prior through Bayes' theorem by conditioning on both samples $\{x_i\}_{i=1}^n$ and function evaluations at those sample points $\{\mathrm{f}_i\}_{i=1}^n$ where $\mathrm{f}_i = f(x_i)$. 
This induces a full posterior distribution over functions $f$ and hence over the value of the integral $p[f]$. 
The most common implementation assumes a Gaussian Process (GP) prior $f \sim \mathcal{GP}(0,k)$. A useful property motivating the use of GPs is that linear projection preserves normality, so that the posterior distribution for the integral $p[f]$ is also a Gaussian, characterised by its mean and covariance. 
A natural estimate of the integral $p[f]$ is given by the mean of this posterior distribution, which can be compactly written as
\begin{equation}
\hat{p}_{\text{BQ}}[f] = \mathrm{z}^T K^{-1} \mathrm{f}.
\label{BQmeaneq}
\end{equation}
where $\mathrm{z}_i = \mu_p(x_i)$ and $K_{ij} = k(x_i,x_j)$. Notice that this estimator takes the form of a quadrature rule with weights $\mathrm{w}^{\text{BQ}} =\mathrm{z}^T K^{-1}$. Recently, \cite{Sarkka2015} showed how specific choices of kernel and design points for BQ can recover classical quadrature rules. This begs the question of how to select design points $\{x_i\}_{i=1}^n$.
A particularly natural approach aims to minimise the posterior uncertainty over the integral $p[f]$, which was shown in \cite[Prop. 1]{Huszar2012} to equal:
\begin{equation}
v_{\text{BQ}}\big(\{x_i\}_{i=1}^n \big) \; = \; p [\mu_p] - \mathrm{z}^T K^{-1} \mathrm{z} \; = \; \text{MMD}^2\big(\{x_i,w_i^{\text{BQ}}\}_{i=1}^n \big).
\label{eq:variance}
\end{equation}
Thus, in the RKHS setting, minimising the posterior variance corresponds to minimising the worst case error of the quadrature rule.
Below we refer to Optimal BQ (OBQ) as BQ coupled with design points $\{x_i^\text{OBQ}\}_{i=1}^n$ chosen to globally minimise \eqref{eq:variance}. 
We also call Sequential BQ (SBQ) the algorithm that greedily selects design points to give the greatest decrease in posterior variance at each iteration. 
OBQ will give improved results over SBQ, but cannot be implemented in general, whereas SBQ is comparatively straight-forward to implement. There are currently no theoretical results establishing the convergence of either BQ, OBQ or SBQ.

{\it Remark:} \eqref{eq:variance} is independent of observed function values $\mathrm{f}$. 
As such, no active learning is possible in SBQ (i.e. surprising function values never cause a revision of a planned sampling schedule). 
This is not always the case:
For example \cite{Gunter2014} approximately encodes non-negativity of $f$ into BQ which leads to a dependence on $\mathrm{f}$ in the posterior variance.
In this case sequential selection becomes an {\it active} strategy that outperforms batch selection in general.


\subsection{Deriving Quadrature Rules via the Frank-Wolfe Algorithm} \label{section:introFW}

Despite the elegance of BQ, its convergence rates have not yet been rigorously established.
In brief, this is because $\hat{p}_{\text{BQ}}[f]$ is an orthogonal projection of $f$ onto the {\it affine} hull of $\{\Phi(x_i)\}_{i=1}^n$, rather than e.g. the {\it convex} hull. 
Standard results from the optimisation literature apply to bounded domains, but the affine hull is not bounded (i.e. the BQ weights can be arbitrarily large and possibly negative).
Below we describe a solution to the problem of computing integrals recently proposed by \cite{Bach2012}, based on the FW algorithm, that restricts attention to the (bounded) convex hull of $\{\Phi(x_i)\}_{i=1}^n$.

The Frank-Wolfe (FW) algorithm (Alg. \ref{alg:FWalgorithm}), also called the conditional gradient algorithm, is a convex optimization method introduced in \cite{Frank1956}. It considers problems of the form $\min_{g \in \mathcal{G}} J(g)$ where the function $J:\mathcal{G}\rightarrow\mathbb{R}$ is convex and continuously differentiable. 
A particular case of interest in this paper will be when the domain $\mathcal{G}$ is a compact and convex space of functions, as recently investigated in \cite{Jaggi2013}. These assumptions imply the existence of a solution to the optimization problem.

At each iteration $i$, the FW algorithm computes a linearisation of the objective function $J$ at the previous state $g_{i-1} \in \mathcal{G}$ along its gradient $(DJ)(g_{i-1})$ and selects an `atom' $\bar{g}_i \in \mathcal{G}$ that minimises the inner product a state $g$ and $(DJ)(g_{i-1})$. 
The new state $g_{i} \in \mathcal{G}$ is then a convex combination of the previous state $g_{i-1}$ and of the atom $\bar{g}_{i}$. This convex combination depends on a step-size $\rho_i$ which is pre-determined and different versions of the algorithm may have different step-size sequences. 

\begin{algorithm}[t]
\caption{The Frank-Wolfe (FW) and Frank-Wolfe with Line-Search (FWLS) Algorithms.}
\label{alg:FWalgorithm}
\begin{algorithmic}[1]
\Require function $J$, initial state $g_1=\bar{g}_1 \in \mathcal{G}$ (and, for FW only: step-size sequence $\{\rho_i\}_{i=1}^n$).
\For{ $i = 2,\ldots,n$ }
\State Compute $\bar{g}_i = \text{argmin}_{g \in \mathcal{G}} \big\langle g , (DJ)(g_{i-1}) \big\rangle_{\times} $
\State [For FWLS only, line search: $\rho_i = \text{argmin}_{\rho \in [0,1]} J \bigl((1-\rho) g_{i-1} + \rho\, \bar{g}_i \bigr)$]
\State Update $g_{i} = (1 - \rho_i) g_{i-1} + \rho_i \bar{g}_i$ 
\EndFor
\end{algorithmic}
\end{algorithm}

Our goal in quadrature is to approximate the mean element $\mu_p$.
Recently \cite{Bach2012} proposed to frame integration as a FW optimisation problem. Here, the domain $\mathcal{G} \subseteq \mathcal{H}$ is a space of functions and taking the objective function to be:
\begin{equation}
J(g) = \frac{1}{2}\big\| g - \mu_p \big\|^2_{\mathcal{H}}.
\end{equation}
This gives an approximation of the mean element and $J$ takes the form of half the posterior variance (or the MMD$^2$). In this functional approximation setting, minimisation of $J$ is carried out over $\mathcal{G} = \mathcal{M}$, the marginal polytope of the RKHS $\mathcal{H}$. 
The marginal polytope $\mathcal{M}$ is defined as the closure of the convex hull of $\Phi(\mathcal{X})$, so that in particular $\mu_p \in \mathcal{M}$.
Assuming as in \cite{Lacoste-Julien2015} that $\Phi(x)$ is uniformly bounded in feature space (i.e. $\exists R>0: \forall x \in \mathcal{X}$, $\|\Phi(x)\|_{\mathcal{H}} \leq R$), then $\mathcal{M}$ is a closed and bounded set and can be optimised over. 

In order to define the algorithm rigorously in this case, we introduce the Fr\'{e}chet derivative of $J$, denoted $DJ$, such that for $\mathcal{H}^*$ being the dual space of $\mathcal{H}$, we have the unique map $DJ:\mathcal{H} \rightarrow \mathcal{H}^*$ such that for each $g \in \mathcal{H}$, $(DJ)(g)$ is the function mapping $h \in \mathcal{H}$ to $(DJ)(g)(h) = \big\langle g - \mu, h \big\rangle_\mathcal{H}$. We also introduce the bilinear map $\langle \cdot, \cdot \rangle_{\times}: \mathcal{H} \times \mathcal{H}^* \rightarrow \mathbb{R}$ which, for $F \in \mathcal{H}^*$ given by $F(g) = \langle g, f \rangle_\mathcal{H}$, is the rule giving $\langle h, F \rangle_{\times} = \langle h, f \rangle_{\mathcal{H}}$.

A particular advantage of this method is that it leads to `sparse' solutions which are linear combinations of the atoms $\{\bar{g}_i\}_{i=1}^n$ \cite{Bach2012}.
In particular this provides a weighted estimate for the mean element:
\begin{equation}\label{eq:FWsparse}
\hat{\mu}_{\text{FW}} \defeq g_n = \sum_{i=1}^n \Big( \prod_{j=i+1}^{n} \big( 1 - \rho_{j-1} \big) \rho_{i-1} \Big) \bar{g}_i  \defeq \sum_{i=1}^n w_i^{\text{FW}} \bar{g}_i ,
\end{equation}
where by default $\rho_0 = 1$ which leads to all $w_i^{\text{FW}} \in [0,1]$ when $\rho_i = 1/(i+1)$.
A typical sequence of approximations to the mean element is shown in Fig. \ref{fig:designpoints} (left), demonstrating that the approximation quickly converges to the ground truth (in black).
Since minimisation of a linear function can be restricted to extreme points of the domain, the atoms will be of the form $\bar{g}_i = \Phi(x_i^{\text{FW}}) = k(\cdot ,x_i^{\text{FW}})$ for some $x_i^{\text{FW}} \in \mathcal{X}$. The minimisation in $g$ over $\mathcal{G}$ from step 2 in Algorithm \ref{alg:FWalgorithm} therefore becomes a minimisation in $x$ over $\mathcal{X}$ and this algorithm therefore provides us design points. In practice, at each iteration $i$, the FW algorithm hence selects a design point $x_i^{\text{FW}} \in \mathcal{X}$ which induces an atom $\bar{g}_i$ and gives us an approximation of the mean element $\mu_p$.
We denote by $\hat{\mu}_{\text{FW}}$ this approximation after $n$ iterations. Using the reproducing property, we can show that the FW estimate is a quadrature rule:
\begin{equation}
\hat{p}_{\text{FW}}[f]
\defeq \big\langle f,\hat{\mu}_{\text{FW}} \big\rangle_{\mathcal{H}}
= \Big\langle f , \sum_{i=1}^n w_i^{\text{FW}} \bar{g}_i \Big\rangle_{\mathcal{H}}
= \sum_{i=1}^n  w_i^{\text{FW}} \big\langle f , k(\cdot,x_i^{\text{FW}}) \big\rangle_{\mathcal{H}}
= \sum_{i=1}^n w_i^{\text{FW}} f(x_i^{\text{FW}}).
\end{equation}
The total computational cost for FW is $\mathcal{O}(n^2)$. 
An extension known as FW with Line Search (FWLS) uses a line-search method to find the optimal step size $\rho_i$ at each iteration (see Alg. \ref{alg:FWalgorithm}). 
Once again, the approximation obtained by FWLS has a sparse expression as a convex combination of all the previously visited states and we obtain an associated quadrature rule.
FWLS has theoretical convergence rates that can be stronger than standard versions of FW but has computational cost in $\mathcal{O}(n^3)$.
The authors in \cite{Garber2015} provide a survey of FW-based algorithms and their convergence rates under different regularity conditions on the objective function and domain of optimisation.

{\it Remark:} The FW design points $\{x_i^{\text{FW}}\}_{i=1}^n$ are generally not available in closed-form.
We follow mainstream literature by selecting, at each iteration, the point that minimises the MMD over a finite collection of $M$ points, drawn i.i.d from $p(x)$.
The authors in \cite{Lacoste-Julien2015} proved that this approximation adds a $\mathcal{O}(M^{-1/4})$ term to the MMD, so that theoretical results on FW convergence continue to apply provided that $M(n) \rightarrow \infty$ sufficiently quickly.
Appendix A provides full details. 
In practice, one may also make use of a numerical optimisation scheme in order to select the points.

\begin{figure}[t]
  \centering
    \includegraphics[width=0.49\textwidth,clip,trim = 0.3cm 0.2cm 0.8cm 0.7cm]{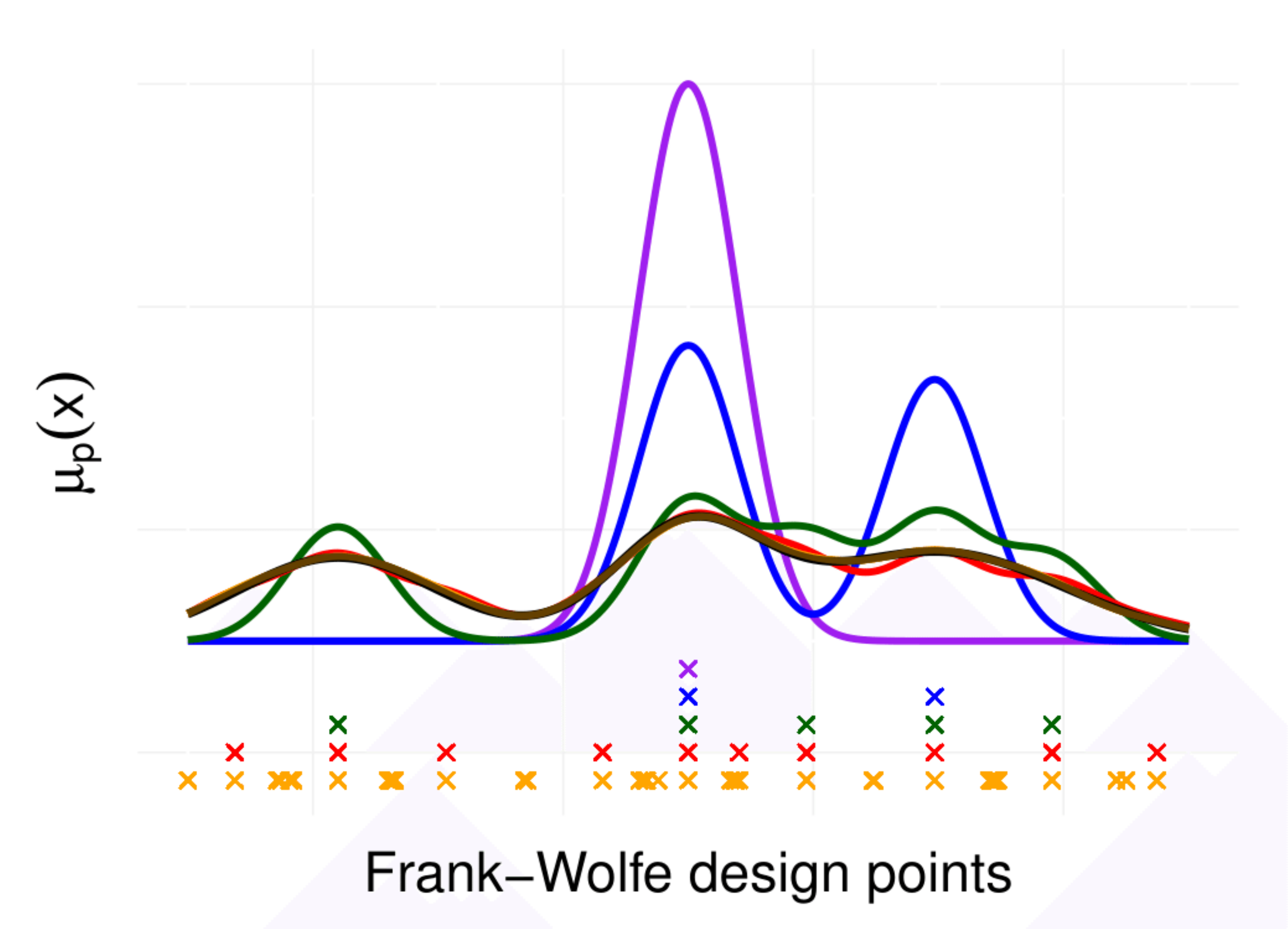}
    \includegraphics[width=0.49\textwidth,clip,trim = 1cm 0.9cm 1.5cm 1cm]{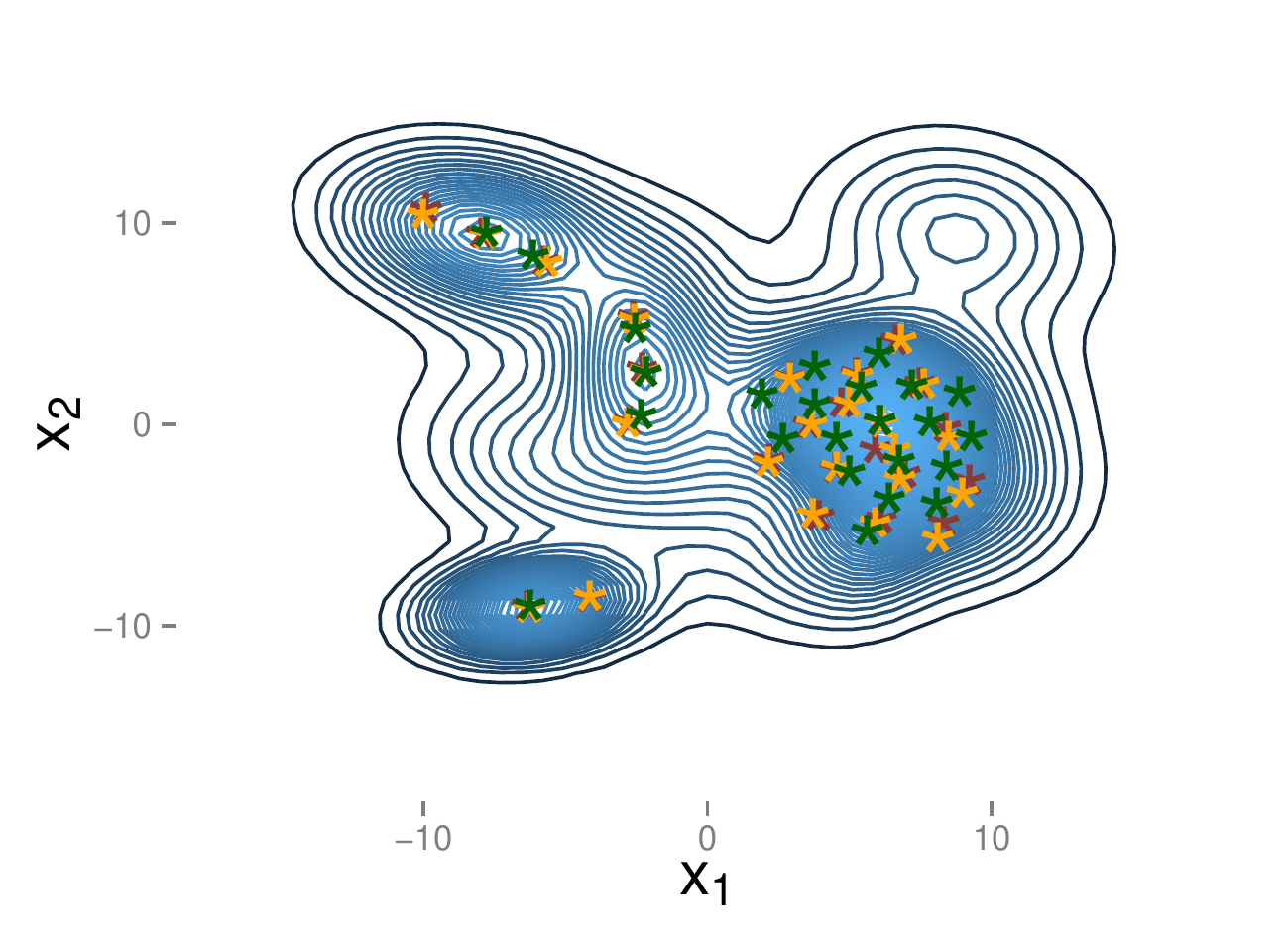}
  \caption{\textit{Left}: Approximations of the mean element $\mu_p$ using the FWLS algorithm, based on $n = 1, 2, 5, 10, 50$ design points (purple, blue, green, red and orange respectively). It is not possible to distinguish between approximation and ground truth when $n=50$. \textit{Right}: Density of a mixture of 20 Gaussian distributions, displaying the first $n=25$ design points chosen by FW (red), FWLS (orange) and SBQ (green). 
  Each method provides well-spaced design points in high-density regions. Most FW and FWLS design points overlap, partly explaining their similar performance in this case.}
\label{fig:designpoints}
\end{figure}


\section{A Hybrid Approach: Frank-Wolfe Bayesian Quadrature} \label{section:BayesianFW}

To combine the advantages of a probabilistic integrator with a formal convergence theory, we propose Frank-Wolfe Bayesian Quadrature (FWBQ).
In FWBQ, we first select design points $\{x_i^{\text{FW}}\}_{i=1}^n$ using the FW algorithm.
However, when computing the quadrature approximation, instead of using the usual FW weights $\{w_i^{\text{FW}}\}_{i=1}^n$ we use instead the weights $\{w_i^{\text{BQ}}\}_{i=1}^n$ provided by BQ.
We denote this quadrature rule by $\hat{p}_{\text{FWBQ}}$ and also consider $\hat{p}_{\text{FWLSBQ}}$, which uses FWLS in place of FW.
As we show below, these hybrid estimators (i) carry the Bayesian interpretation of Sec. \ref{subsec:BQ}, (ii) permit a rigorous theoretical analysis, and (iii) out-perform existing FW quadrature rules by orders of magnitude in simulations. 
FWBQ is hence ideally suited to probabilistic numerics applications.

For these theoretical results we assume that $f$ belongs to a finite-dimensional RKHS $\mathcal{H}$, in line with recent literature \cite{Bach2012,Garber2015,Jaggi2013,Lacoste-Julien2015}.
We further assume that $\mathcal{X}$ is a compact subset of $\mathbb{R}^d$, that $p(x) > 0$ $\forall x \in \mathcal{X}$ and that $k$ is continuous on $\mathcal{X} \times \mathcal{X}$.
Under these hypotheses, Theorem \ref{theorem:posteriormean} establishes consistency of the posterior mean, while Theorem \ref{theo1} establishes contraction for the posterior distribution.

\begin{theorem}[Consistency] \label{theorem:posteriormean}
The posterior mean $\hat{p}_{\emph{FWBQ}}[f]$ converges to the true integral $p[f]$ at the following rates:
\begin{equation}
\Big|p[f] - \hat{p}_{\emph{FWBQ}}[f]\Big| \leq \text{MMD}\big(\{x_i,w_i\}_{i=1}^n\big) \leq \left\{ \begin{array}{cl} \frac{2 D^2}{R}  n^{-1} & \text{for FWBQ}\\
 \sqrt{2}D \exp(-\frac{R^2}{2 D^2} n) & \text{for FWLSBQ}  \end{array} \right.
\end{equation}
where the FWBQ uses step-size $\rho_i = 1/(i+1)$, $D \in (0,\infty)$ is the diameter of the marginal polytope $\mathcal{M}$ and $R \in (0,\infty)$ gives the radius of the smallest ball of center $\mu_p$ included in $\mathcal{M}$.
\end{theorem}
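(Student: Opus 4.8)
The plan is to split the claimed chain into its two inequalities and treat them separately: the left inequality is a generic RKHS property of any quadrature rule, whereas the right inequality is where the Frank--Wolfe geometry enters and produces the two distinct rates. For the left inequality I would use the reproducing property together with Cauchy--Schwarz. Writing $\mu_{\hat{p}} = \sum_{i=1}^n w_i^{\text{BQ}}\Phi(x_i^{\text{FW}})$ for the mean element of the FWBQ rule, we have $p[f]-\hat{p}_{\text{FWBQ}}[f] = \langle f, \mu_p - \mu_{\hat{p}}\rangle_{\mathcal{H}}$, so that $|p[f]-\hat{p}_{\text{FWBQ}}[f]| \le \|f\|_{\mathcal{H}}\,\|\mu_p-\mu_{\hat{p}}\|_{\mathcal{H}} = \|f\|_{\mathcal{H}}\,\text{MMD}(\{x_i,w_i\}_{i=1}^n)$, which is the stated bound once $f$ ranges over the unit ball of $\mathcal{H}$ (equivalently, this is just the definition of the MMD). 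This step is essentially immediate from the material already developed.

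The right inequality is the substantive part, and the key observation is that the BQ weights are MMD-optimal for a fixed set of design points. For the points $\{x_i^{\text{FW}}\}_{i=1}^n$ chosen by Frank--Wolfe, the map $w \mapsto \text{MMD}^2(\{x_i^{\text{FW}},w\}) = \|\mu_p - \sum_i w_i \Phi(x_i^{\text{FW}})\|_{\mathcal{H}}^2$ is a convex quadratic in $w$ whose minimiser is precisely $w = K^{-1}\mathrm{z} = w^{\text{BQ}}$ (this is exactly the identity $v_{\text{BQ}} = \text{MMD}^2$ used earlier). Hence the FWBQ rule can only improve on the plain Frank--Wolfe rule, and $\text{MMD}(\{x_i^{\text{FW}},w^{\text{BQ}}\}) \le \text{MMD}(\{x_i^{\text{FW}},w^{\text{FW}}\}) = \|\mu_p - g_n\|_{\mathcal{H}} = \sqrt{2J(g_n)}$, where $g_n = \hat{\mu}_{\text{FW}}$ is the $n$-th Frank--Wolfe iterate. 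It therefore suffices to bound the primal suboptimality $J(g_n) = \tfrac{1}{2}\|\mu_p - g_n\|_{\mathcal{H}}^2$, recalling that $J$ attains its minimum value $0$ at $\mu_p \in \mathcal{M}$.

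To bound $J(g_n)$ I would import the convergence theory for Frank--Wolfe on a smooth, strongly convex objective over a compact convex domain, specialised to this setting. Here $J$ is $1$-smooth and $1$-strongly convex (its Hessian is the identity), $\mathcal{M}$ has finite diameter $D$, and crucially $\mu_p$ lies in the interior of $\mathcal{M}$, with a ball of radius $R$ centred at $\mu_p$ contained in $\mathcal{M}$; it is precisely this interior-optimum condition that sharpens the rates and explains the appearance of $R$. For the default step-size $\rho_i = 1/(i+1)$, the refined analysis of \cite{Bach2012} exploiting the interior optimum improves the generic $\mathcal{O}(n^{-1/2})$ MMD rate to $\|\mu_p - g_n\|_{\mathcal{H}} \le 2D^2 R^{-1} n^{-1}$, whereas the line-search variant (FWLS) converts the interior condition into a geometric contraction of the primal gap, giving $\|\mu_p - g_n\|_{\mathcal{H}} \le \sqrt{2}\,D\,\exp\!\big(-\tfrac{R^2}{2D^2}n\big)$. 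Chaining these with the optimality inequality of the previous paragraph yields the two cases of the theorem.

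The main obstacle is this last step: establishing the refined Frank--Wolfe rates that exploit the radius $R$, rather than the textbook $\mathcal{O}(n^{-1/2})$ rate. The geometric heart of the argument is to lower-bound the Frank--Wolfe duality gap at $g_n$ in terms of the distance from $\mu_p$ to the boundary of $\mathcal{M}$; the line-search case in particular requires showing that, because the optimum sits at least a distance $R$ inside $\mathcal{M}$, each step shrinks the gap by a constant multiplicative factor controlled by $(R/D)^2$. This is where the finite-dimensionality of $\mathcal{H}$ and the hypotheses that $\mathcal{X}$ is compact, $p(x)>0$, and $k$ is continuous are genuinely used, since they guarantee $D<\infty$ and $R>0$, i.e.\ that $\mu_p$ is truly interior. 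The $\mathcal{O}(n^{-1})$ bound for the unit-step variant is also more delicate than the generic rate and leans on the strong convexity of $J$ together with the same interior condition.
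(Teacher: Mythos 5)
Your proposal is correct and follows essentially the same route as the paper's proof: the left inequality via the definition of the MMD (Cauchy--Schwarz with $\|f\|_{\mathcal{H}}\le 1$), the right inequality via optimality of the BQ weights for fixed design points (the paper cites the Bayes-estimator/minimax property from \cite{Huszar2012}, while you note the equivalent fact that $w^{\text{BQ}}=K^{-1}\mathrm{z}$ minimises the convex quadratic $\|\mu_p-\sum_i w_i\Phi(x_i^{\text{FW}})\|_{\mathcal{H}}^2$), followed by the identity $\text{MMD}(\{x_i^{\text{FW}},w_i^{\text{FW}}\})=\sqrt{2J(g_n)}$ and the rates $J(g_n)\le 2D^4R^{-2}n^{-2}$ (FW, $\rho_i=1/(i+1)$) and $J(g_n)\le D^2\exp(-R^2n/D^2)$ (FWLS) from Prop.~1 of \cite{Bach2012}. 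The paper, like you, imports these last bounds rather than re-deriving them, so the step you flag as the ``main obstacle'' is resolved by citation in both treatments.
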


Note that all the proofs of this paper can be found in Appendix B. An immediate corollary of Theorem \ref{theorem:posteriormean} is that FWLSBQ has an asymptotic error which is exponential in $n$ and is therefore superior to that of any QMC estimator \cite{Dick2010}.
This is not a contradiction - recall that QMC restricts attention to uniform weights, while FWLSBQ is able to propose arbitrary weightings.
In addition we highlight a robustness property:
Even when the assumptions of this section do not hold, one still obtains atleast a rate $\mathcal{O}_P (n^{-1/2})$ for the posterior mean using either FWBQ or FWLSBQ \cite{Dunn1980}.

{\it Remark}: The choice of kernel affects the convergence of the FWBQ method \cite{Hennig2015ProbNum}. Clearly, we expect faster convergence if the function we are integrating is `close' to the space of functions induced by our kernel. 
Indeed, the kernel specifies the geometry of the marginal polytope $\mathcal{M}$, that in turn directly influences the rate constant $R$ and $D$ associated with FW convex optimisation.

Consistency is only a stepping stone towards our main contribution which establishes posterior contraction rates for FWBQ.
Posterior contraction is important as these results justify, for the first time, the probabilistic numerics approach to integration; that is, we show that the {\it full} posterior distribution is a sensible quantification (at least asymptotically) of numerical error in the integration routine:

\begin{theorem}[Contraction] \label{theo1}
Let $S \subseteq \mathbb{R}$ be an open neighbourhood of the true integral $p[f]$ and let $\gamma = \inf_{r \in S^C} | r - p[f]| >0$.
Then the posterior probability mass on $S^c = \mathbb{R} \setminus S$ vanishes at a rate:
\begin{equation}
\emph{prob}(S^c) \leq \left\{ \begin{array}{cl} \frac{2\sqrt{2}D^2}{\sqrt{\pi}R \gamma} n^{-1} \exp \Big(- \frac{\gamma^2 R^2}{8 D^4} n^2 \Big) 
& \text{for FWBQ} \\ 
\frac{2 D}{\sqrt{\pi} \gamma} \exp\Big( - \frac{R^2}{2 D^2}  n - \frac{\gamma^2}{2\sqrt{2}D} \exp\big( \frac{R^2}{2D^2} n\big)\Big)
& \text{for FWLSBQ} \end{array} \right.
\end{equation}
where the FWBQ uses step-size $\rho_i = 1/(i+1)$, $D \in (0,\infty)$ is the diameter of the marginal polytope $\mathcal{M}$ and $R \in (0,\infty)$ gives the radius of the smallest ball of center $\mu_p$ included in $\mathcal{M}$.
\end{theorem}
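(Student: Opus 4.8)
The plan is to exploit the fact that, under the GP prior of Section~\ref{subsec:BQ}, the posterior over the integral $p[f]$ is exactly Gaussian. Since FWBQ uses the BQ weights, the posterior law of $p[f]$ is $\mathcal{N}(\hat m,\sigma^2)$ with mean $\hat m = \hat p_{\text{FWBQ}}[f]$ and, by \eqref{eq:variance}, variance $\sigma^2 = v_{\text{BQ}}(\{x_i^{\text{FW}}\}_{i=1}^n) = \text{MMD}^2(\{x_i^{\text{FW}},w_i^{\text{BQ}}\}_{i=1}^n)$; crucially, this is precisely the quantity bounded in Theorem~\ref{theorem:posteriormean}. Writing $\text{prob}(\cdot)$ for this posterior measure and using that $S^c$ lies entirely outside the $\gamma$-ball around $p[f]$, I would first reduce the claim to a Gaussian tail estimate,
\begin{equation}
\text{prob}(S^c) \;\le\; \text{prob}\big(\{r : |r-p[f]| \ge \gamma\}\big),
\end{equation}
so that the whole problem becomes controlling how much mass a Gaussian with small variance $\sigma^2$ places at distance at least $\gamma$ from $p[f]$.

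The next step is a standard Gaussian (Mills-ratio) tail bound, $\mathcal{N}(0,1)\big([a,\infty)\big) \le (a\sqrt{2\pi})^{-1}\exp(-a^2/2)$ for $a>0$, applied to each of the two tails. The one subtlety is that the posterior is centred at $\hat m$ rather than at $p[f]$; here I would invoke Theorem~\ref{theorem:posteriormean}, which gives $|\hat m - p[f]| \le \text{MMD} = \sigma$, i.e.\ the mean is displaced from the truth by at most one posterior standard deviation. Feeding this into the two tail contributions and simplifying yields a bound of the generic form
\begin{equation}
\text{prob}(S^c) \;\le\; \frac{2\sigma}{\sqrt{2\pi}\,\gamma}\,\exp\!\Big(-\frac{\gamma^2}{2\sigma^2}\Big),
\end{equation}
valid once $\sigma$ is small enough relative to $\gamma$ (which holds for $n$ large by consistency), with $\sigma = \text{MMD}$.

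Finally I would substitute the two convergence rates supplied by Theorem~\ref{theorem:posteriormean}. Plugging $\sigma \le \tfrac{2D^2}{R}n^{-1}$ into the generic bound reproduces the FWBQ line, where the $n^{-1}$ in the prefactor and the factor $\exp(-\tfrac{\gamma^2 R^2}{8D^4}n^2)$ both arise from $\sigma \sim n^{-1}$ entering linearly in the prefactor and quadratically (through $\sigma^{-2}$) in the exponent. Plugging the exponential FWLSBQ rate $\sigma \le \sqrt{2}D\exp(-\tfrac{R^2}{2D^2}n)$ and absorbing the prefactor's exponential into the exponent gives the double-exponential FWLSBQ bound. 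I expect the main obstacle to be exactly the centring issue in the middle step: one must combine the consistency estimate with the tail bound carefully so that the displacement of the posterior mean does not degrade the leading $\exp(-\gamma^2/2\sigma^2)$ behaviour, since it is this interplay between ``mean error $\le$ one standard deviation'' and the Gaussian tail that makes the clean contraction rates possible.
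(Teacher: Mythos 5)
Your proposal is correct and follows essentially the same route as the paper's own proof: identify the posterior as $\mathcal{N}(m_n,\sigma_n^2)$ with $\sigma_n = \mathrm{MMD}$, bound the mass on $S^c$ by the two Gaussian tails at distance $\gamma$ from $p[f]$, control the centring error via $|m_n - p[f]| \le \sigma_n \|f\|_{\mathcal{H}} \le \sigma_n$, apply a Mills-ratio/erfc tail estimate, and substitute the rates from the consistency theorem. The paper resolves your noted ``centring obstacle'' in exactly the same (asymptotic, as $\sigma_n \to 0$) manner, so the two arguments coincide in substance and in the resulting constants.
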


The contraction rates are exponential for FWBQ and super-exponential for FWLBQ, and thus the two algorithms enjoy both a probabilistic interpretation and rigorous theoretical guarantees. 
A notable corollary is that OBQ enjoys the same rates as FWLSBQ, resolving a conjecture by Tony O'Hagan that OBQ converges exponentially [personal communication]:

\begin{corollary*}\label{theorem:OBQ_rates}
The consistency and contraction rates obtained for FWLSBQ apply also to OBQ.
\end{corollary*}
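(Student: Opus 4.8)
The plan is to observe that OBQ and FWLSBQ differ only in \emph{how} they place their design points, not in the weights they use nor in their probabilistic interpretation: both employ the Bayesian Quadrature weights $\mathrm{w}^{\text{BQ}} = \mathrm{z}^T K^{-1}$, so both carry the Gaussian posterior of Sec.~\ref{subsec:BQ} whose posterior variance equals $\text{MMD}^2$ by \eqref{eq:variance}. Consequently the entire contraction argument used for FWLSBQ is structurally identical for OBQ once the achieved MMD is controlled, and the corollary reduces to a single comparison of the two maximum mean discrepancies.

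First I would make OBQ well-defined: since $\mathcal{X}$ is compact and $k$ is continuous, $v_{\text{BQ}}(\{x_i\}_{i=1}^n)$ is continuous in the design points over the compact set $\mathcal{X}^n$, so a global minimiser $\{x_i^{\text{OBQ}}\}_{i=1}^n$ exists. Next I would recall that, for \emph{fixed} design points, the BQ weights are exactly the MMD-minimising weights: $\text{MMD}^2(\{x_i,w_i^{\text{BQ}}\}) = \| \mu_p - \sum_i w_i^{\text{BQ}}\Phi(x_i)\|_{\mathcal{H}}^2$ is the squared distance from $\mu_p$ to its orthogonal projection onto $\text{span}\{\Phi(x_i)\}_{i=1}^n$, which is minimal over all weight vectors (the normal equations $Kw=\mathrm{z}$ give precisely $w=K^{-1}\mathrm{z}$). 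Hence $v_{\text{BQ}}$ equals the squared MMD minimised jointly over points \emph{and} weights. Because OBQ minimises $v_{\text{BQ}}$ over all admissible design points whereas FWLSBQ corresponds to the single feasible configuration $\{x_i^{\text{FWLS}},w_i^{\text{BQ}}\}$, I obtain
\begin{equation}
\text{MMD}\big(\{x_i^{\text{OBQ}},w_i^{\text{BQ}}\}\big) \;\leq\; \text{MMD}\big(\{x_i^{\text{FWLS}},w_i^{\text{BQ}}\}\big) \;\leq\; \sqrt{2}\,D\,\exp\Big(-\tfrac{R^2}{2D^2}n\Big),
\end{equation}
the last inequality being Theorem~\ref{theorem:posteriormean} for FWLSBQ. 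Thus OBQ inherits the FWLSBQ bound on the MMD.

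Finally I would transfer both conclusions. Consistency is immediate from $|p[f]-\hat{p}_{\text{OBQ}}[f]|\leq \text{MMD}(\{x_i^{\text{OBQ}},w_i^{\text{BQ}}\})$ combined with the displayed bound. For contraction, the FWLSBQ bound in Theorem~\ref{theo1} is obtained by substituting the FWLSBQ MMD bound into a Gaussian tail estimate for the posterior $\mathcal{N}(\hat{p}[f],\text{MMD}^2)$, and this estimate is monotone increasing in the MMD (a smaller posterior standard deviation concentrates mass more tightly about $p[f]$). Re-running the same tail argument with the OBQ posterior, whose standard deviation obeys the same upper bound, therefore reproduces the FWLSBQ rate verbatim.

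The only delicate points, and hence the main obstacle, are the two facts invoked above: that the BQ weights are the genuine MMD-minimisers for fixed points, so that OBQ is truly a joint minimiser dominating every fixed-point rule including FWLSBQ (this follows from the orthogonal-projection characterisation of BQ); and that the contraction bound depends monotonically on the MMD, so that substituting the smaller OBQ discrepancy is legitimate (this follows from the explicit Gaussian form of the posterior). Neither requires any new estimate beyond those already established for FWLSBQ.
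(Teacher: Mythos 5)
Your proposal is correct and follows essentially the same route as the paper: the key step in both is the inequality $\text{MMD}\big(\{x_i^{\text{OBQ}},w_i^{\text{BQ}}\}_{i=1}^n\big) \leq \text{MMD}\big(\{x_i^{\text{FW}},w_i^{\text{BQ}}\}_{i=1}^n\big)$, which follows from OBQ's global minimisation of the posterior variance (equivalently the MMD) over design points, after which the bounds of Theorems~1 and~2 transfer verbatim. Your additional observations --- existence of the OBQ minimiser by compactness, the orthogonal-projection characterisation of the BQ weights, and the monotonicity of the Gaussian tail bound in the posterior standard deviation --- are details the paper leaves implicit, but they do not change the argument.
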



\section{Experimental Results}\label{section:Experimental_Results}

\subsection{Simulation Study}

To facilitate the experiments in this paper we followed \cite{Bach2015,Bach2012,Rasmussen2003,Lacoste-Julien2015} and employed an exponentiated-quadratic (EQ) kernel $k(x, x') \defeq \lambda^{2} \exp ( -\nicefrac{1}{2 \sigma^2} \|x-x'\|^2_2)$.
This corresponds to an infinite-dimensional RKHS, not covered by our theory; nevertheless, we note that all simulations are practically finite-dimensional due to rounding at machine precision. See Appendix E for a finite-dimensional approximation using random Fourier features.
EQ kernels are popular in the BQ literature as, when $p$ is a mixture of Gaussians, the mean element $\mu_p$ is analytically tractable (see Appendix C). Some other $(p,k)$ pairs that produce analytic mean elements are discussed in \cite{Bach2015}.

For this simulation study, we took $p(x)$ to be a 20-component mixture of 2D-Gaussian distributions. 
Monte Carlo (MC) is often used for such distributions but has a slow convergence rate in $\mathcal{O}_P(n^{-1/2})$.
FW and FWLS are known to converge more quickly and are in this sense preferable to MC \cite{Bach2012}. 
In our simulations (Fig. \ref{fig:sim study}, left), both our novel methods FWBQ and FWLSBQ decreased the MMD much faster than the FW/FWLS methods of \cite{Bach2012}. Here, the same kernel hyper-parameters $(\lambda,\sigma) = (1,0.8)$ were employed for all methods to have a fair comparison.
This suggests that the best quadrature rules correspond to elements {\it outside} the convex hull of $\{\Phi(x_i)\}_{i=1}^n$.
Examples of those, including BQ, often assign negative weights to features (Fig. S1 right, Appendix D).

The principle advantage of our proposed methods is that they reconcile theoretical tractability with a fully probabilistic interpretation.
For illustration, Fig. \ref{fig:sim study} (right) plots the posterior uncertainty due to numerical error for a typical integration problem based on this $p(x)$.
In-depth empirical studies of such posteriors exist already in the literature and the reader is referred to \cite{Chen2015,Hamrick2013mental,OHagan1991} for details.

\begin{figure}
\centering
\includegraphics[width = 0.48\textwidth]{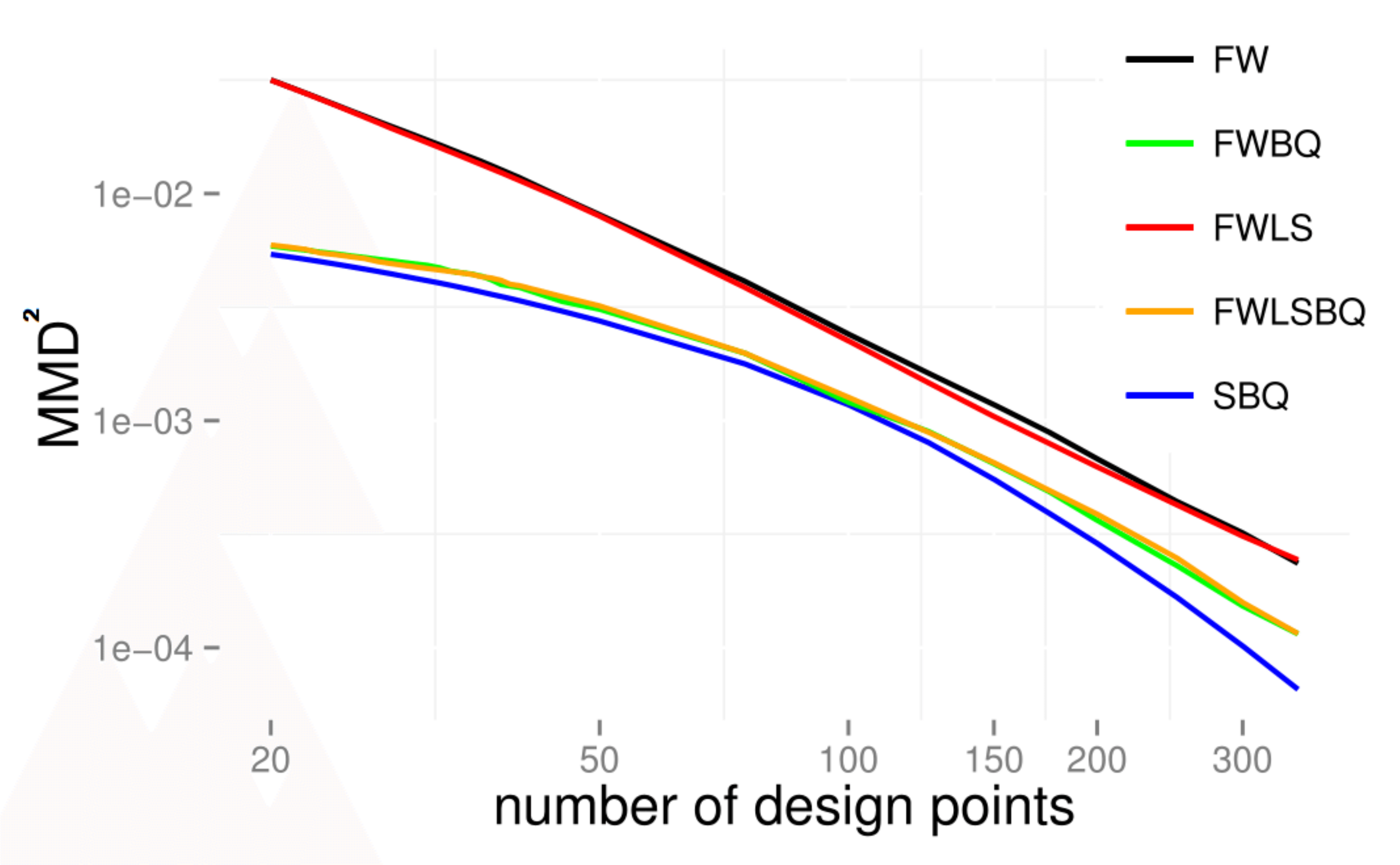}
\includegraphics[width = 0.48\textwidth]{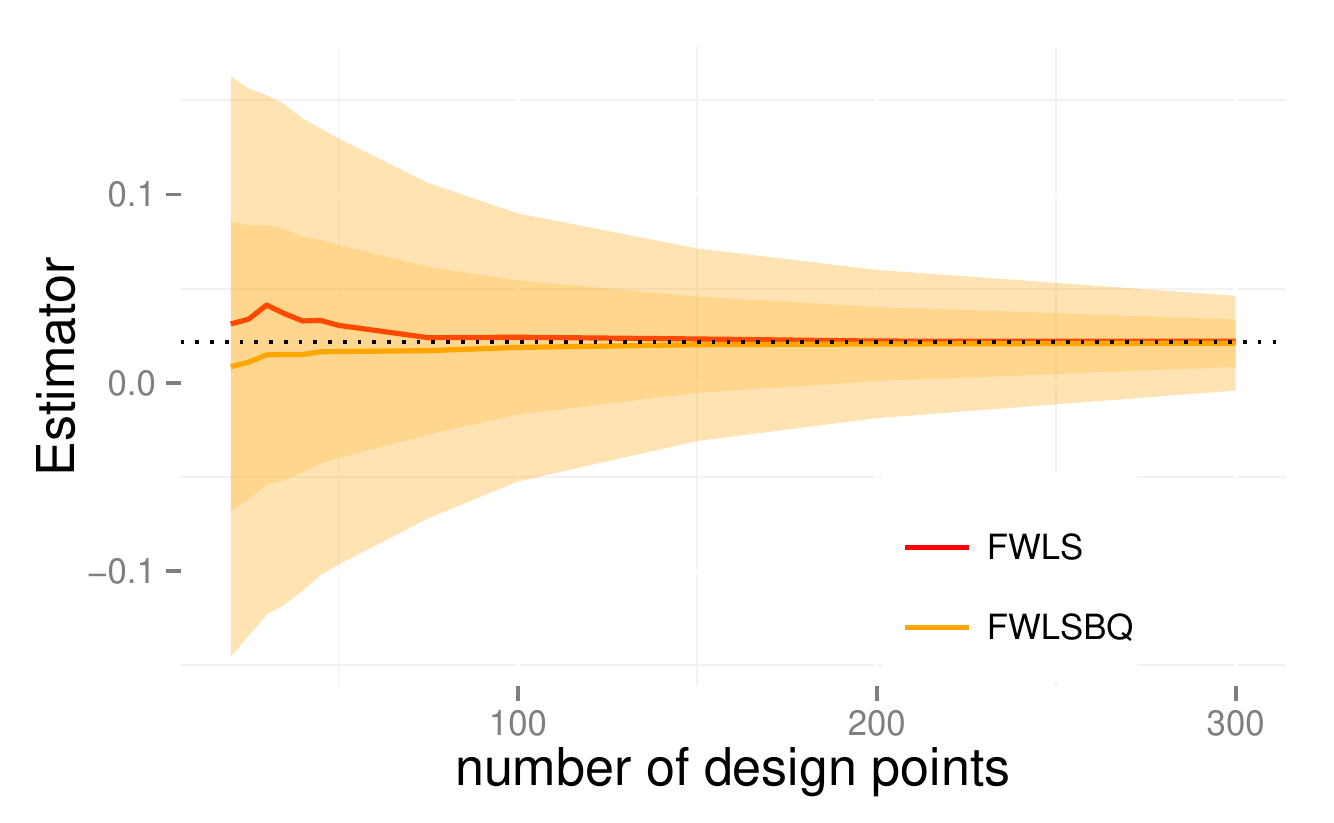}
\caption{Simulation study. 
{\it Left}: Plot of the worst-case integration error squared (MMD$^2$). 
Both FWBQ and FWLSBQ are seen to outperform FW and FWLS, with SBQ performing best overall. 
{\it Right}: Integral estimates for FWLS and FWLSBQ for a function $f \in \mathcal{H}$. FWLS converges more slowly and provides only a point estimate for a given number of design points. In contrast, FWLSBQ converges faster and provides a full probability distribution over numerical error shown shaded in orange (68\% and 95\% credible intervals). Ground truth corresponds to the dotted black line.}
\label{fig:sim study}
\end{figure}

Beyond these theoretically tractable integrators, SBQ seems to give even better performance as $n$ increases. 
An intuitive explanation is that SBQ picks $\{x_i\}_{i=1}^n$ to minimise the MMD whereas FWBQ and FWLSBQ only minimise an approximation of the MMD (its linearisation along $DJ$). 
In addition, the SBQ weights are optimal at each iteration, which is not true for FWBQ and FWLSBQ. 
We conjecture that Theorem \ref{theorem:posteriormean} and \ref{theo1} provide upper bounds on the rates of SBQ. 
This conjecture is partly supported by Fig. \ref{fig:designpoints} (right), which shows that SBQ selects similar design points to FW/FWLS (but weights them optimally). Note also that both FWBQ and FWLSBQ give very similar result. This is not surprising as FWLS has no guarantees over FW in infinite-dimensional RKHS \cite{Jaggi2013}.

\subsection{Quantifying Numerical Error in a Proteomic Model Selection Problem}

A topical bioinformatics application that extends recent work by \cite{Oates2014} is presented.
The objective is to select among a set of candidate models $\{M_i\}_{i=1}^m$ for protein regulation. This choice is based on a dataset $\mathcal{D}$ of protein expression levels, in order to determine a `most plausible' biological hypothesis for further experimental investigation.
Each $M_i$ is specified by a vector of kinetic parameters $\theta_i$ (full details in Appendix D).
Bayesian model selection requires that these parameters are integrated out against a prior $p(\theta_i)$ to obtain marginal likelihood terms $L(M_i) = \int p(\mathcal{D}|\theta_i) p(\theta_i) \mathrm{d}\theta_i$.
Our focus here is on obtaining the {\it maximum a posteriori} (MAP) model $M_j$, defined as the maximiser of the posterior model probability $L(M_j) / \sum_{i=1}^m L(M_i)$ (where we have assumed a uniform prior over model space).
Numerical error in the computation of each term $L(M_i)$, if unaccounted for, could cause us to return a model $M_k$ that is different from the true MAP estimate $M_j$ and lead to the mis-allocation of valuable experimental resources.

The problem is quickly exaggerated when the number $m$ of models increases, as there are more opportunities for one of the $L(M_i)$ terms to be `too large' due to numerical error.
In \cite{Oates2014}, the number $m$ of models was combinatorial in the number of protein kinases measured in a high-throughput assay (currently $\sim 10^2$ but in principle up to $\sim 10^4$).
This led \cite{Oates2014} to deploy substantial computing resources to ensure that numerical error in each estimate of $L(M_i)$ was individually controlled.
Probabilistic numerics provides a more elegant and efficient solution:
At any given stage, we have a fully probabilistic quantification of our uncertainty in each of the integrals $L(M_i)$, shown to be sensible both theoretically and empirically.
This induces a full posterior distribution over numerical uncertainty in the location of the MAP estimate (i.e. `Bayes all the way down').
As such we can determine, on-line, the precise point in the computational pipeline when numerical uncertainty near the MAP estimate becomes acceptably small, and cease further computation.

The FWBQ methodology was applied to one of the model selection tasks in \cite{Oates2014}.
In Fig. \ref{fig:model posteriors} (left) we display posterior model probabilities for each of $m = 352$ candidates models, where a low number ($n = 10$) of samples were used for each integral.
(For display clarity only the first 50 models are shown.)
In this low-$n$ regime, numerical error introduces a second level of uncertainty that we quantify by combining the FWBQ error models for all integrals in the computational pipeline; this is summarised by a box plot (rather than a single point) for each of the models (obtained by sampling - details in Appendix D).
These box plots reveal that our estimated posterior model probabilities are completely dominated by numerical error.
In contrast, when $n$ is increased through 50, 100 and 200 (Fig. \ref{fig:model posteriors}, right and Fig. S2), the uncertainty due to numerical error becomes negligible. At $n = 200$ we can conclude that model $26$ is the true MAP estimate and further computations can be halted.
Correctness of this result was confirmed using the more computationally intensive methods in \cite{Oates2014}.

In Appendix D we compared the relative performance of FWBQ, FWLSBQ and SBQ on this problem.
Fig. S1 shows that the BQ weights reduced the MMD by orders of magnitude relative to FW and FWLS and that SBQ converged more quickly than both FWBQ and FWLSBQ.

\begin{figure}[t]
\centering
\includegraphics[width = 0.48\textwidth]{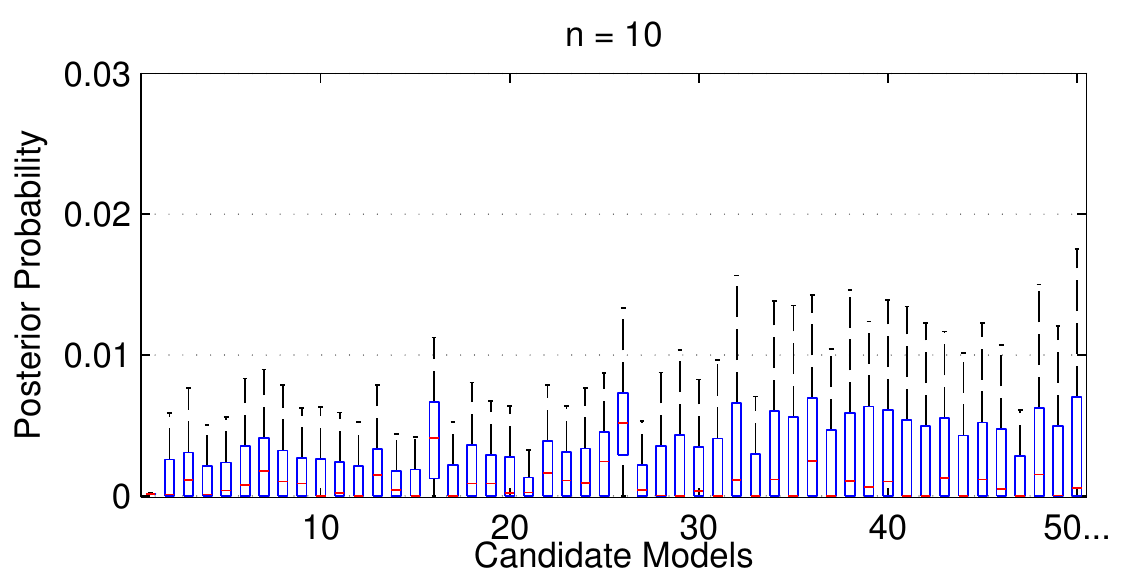}
\includegraphics[width = 0.48\textwidth]{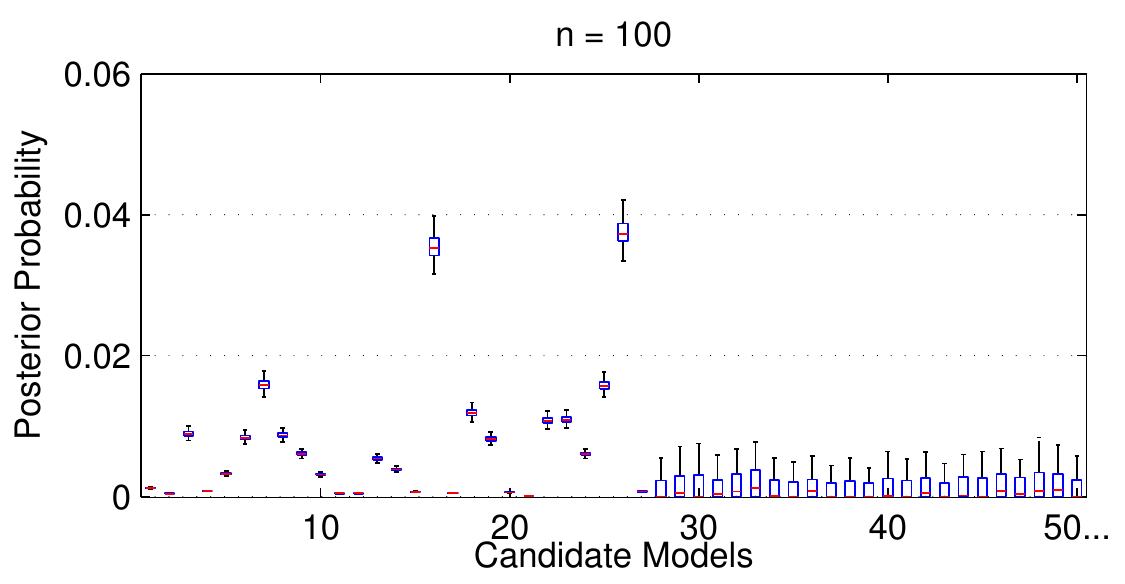}
\caption{Quantifying numerical error in a model selection problem. FWBQ was used to model the numerical error of each integral $L(M_i)$ explicitly.
For integration based on $n=10$ design points, FWBQ tells us that the computational estimate of the model posterior will be dominated by numerical error (left). 
When instead $n=100$ design points are used (right), uncertainty due to numerical error becomes much smaller (but not yet small enough to determine the MAP estimate).}
\label{fig:model posteriors}
\end{figure}


\section{Conclusions}

This paper provides the first theoretical results for probabilistic integration, in the form of posterior contraction rates for FWBQ and FWLSBQ.
This is an important step in the probabilistic numerics research programme \cite{Hennig2015ProbNum} as it establishes a theoretical justification for using the posterior distribution as a model for the numerical integration error (which was previously assumed \cite[e.g.]{Rasmussen2003,Gunter2014,Oates2015,Osborne2012,Sarkka2015}).
The practical advantages conferred by a fully probabilistic error model were demonstrated on a model selection problem from proteomics, where sensitivity of an evaluation of the MAP estimate was modelled in terms of the error arising from repeated numerical integration.

The strengths and weaknesses of BQ (notably, including scalability in the dimension $d$ of $\mathcal{X}$) are well-known and are inherited by our FWBQ methodology.
We do not review these here but refer the reader to \cite{OHagan1991} for an extended discussion.
Convergence, in the classical sense, was proven here to occur exponentially quickly for FWLSBQ, which partially explains the excellent performance of BQ and related methods seen in applications \cite{Gunter2014,Osborne2012}, as well as resolving an open conjecture.
As a bonus, the hybrid quadrature rules that we developed turned out to converge much faster in simulations than those in \cite{Bach2012}, which originally motivated our work.

A key open problem for kernel methods in probabilistic numerics is to establish protocols for the practical elicitation of kernel hyper-parameters.
This is important as hyper-parameters directly affect the scale of the posterior over numerical error that we ultimately aim to interpret.
Note that this problem applies equally to BQ, as well as related quadrature methods \cite{Bach2012,Rasmussen2003,Gunter2014,Oates2015} and more generally in probabilistic numerics \cite{Schober2014}.
Previous work, such as \cite{Hamrick2013mental}, optimised hyper-parameters on a per-application basis.
Our ongoing research seeks automatic and general methods for hyper-parameter elicitation that provide good frequentist coverage properties for posterior credible intervals, but we reserve the details for a future publication.


\subsubsection*{Acknowledgments}

The authors are grateful for discussions with Simon Lacoste-Julien, Simo S{\"a}rkk{\"a}, Arno Solin, Dino Sejdinovic, Tom Gunter and Mathias Cronj{\"a}ger. FXB was supported by EPSRC [EP/L016710/1]. CJO was supported by EPSRC [EP/D002060/1]. MG was supported by EPSRC [EP/J016934/1], an EPSRC Established Career Fellowship, the EU grant [EU/259348] and a Royal Society Wolfson Research Merit Award. 

\newpage

{\small 
\bibliographystyle{plain}
\bibliography{NIPS_bib}
}
\newpage

\beginsupplement

\section*{Supplementary Material}

\subsection*{Appendix A: Details for the FWBQ and FWLSBQ Algorithms}

A high-level pseudo-code description for the Frank-Wolfe Bayesian Quadrature (FWBQ) algorithm is provided below.

\begin{algorithm}[h!]
\caption{The Frank-Wolfe Bayesian Quadrature (FWBQ) Algorithm}
\label{alg:FWBQalgorithm}
\begin{algorithmic}[1]
\Require  function $f$, reproducing kernel $k$, initial point $x_0 \in \mathcal{X}$.
\State Compute design points $\big\{x_i^{\text{FW}} \big\}_{i=1}^n$ using the FW algorithm (Alg. 1).
\State Compute associated weights $\big\{w_i^{\text{BQ}}\big\}_{i=1}^n$ using BQ (Eqn. 4).
\State Compute the posterior mean $\hat{p}_{\text{FWBQ}}[f]$, i.e. the quadrature rule with $\big\{ x_i^{\text{FW}}, w_i^{\text{BQ}} \big\}_{i=1}^n$.
\State Compute the posterior variance $v_{\text{BQ}}\big(\{x_i^{\text{FW}}\}_{i=1}^n \big)$ using BQ (Eqn. 5).
\State Return the full posterior $\mathcal{N} \big(\hat{p}_{\text{FWBQ}},v_{\text{BQ}}(\{x_i^{\text{FW}}\}_{i=1}^n)\big)$ for the integral $p[f]$.
\end{algorithmic}
\end{algorithm}

 Frank-Wolfe Line-Search Bayesian Quadrature (FWLSBQ) is simply obtained by substituting the Frank-Wolfe algorithm with the Frank-Wolfe Line-Search algorithm. In this appendix, we derive all of the expressions necessary to implement both the FW and FWLS algorithms (for quadrature) in practice. All of the other steps can be derived from the relevant equations as highlighted in Algorithm \ref{alg:FWBQalgorithm} above.
 
The FW/FWLS are both initialised by the user choosing a design point $x_1^{\text{FW}}$. This can be done either at random or by choosing a location which is known to have high probability mass under $p(x)$. The first approximation to $\mu_p$ is therefore given by $g_1 = k(\cdot,x_1^{\text{FW}})$. The algorithm then loops over the next three steps to obtain new design points $\{x_i^{\text{FW}}\}_{i=2}^n$:

\noindent {\it Step 1) Obtaining the new Frank-Wolfe design points $x_{i+1}^{\text{FW}}$.}

At iteration $i$, the step consists of choosing the point $\bar{x}_i^{\text{FW}}$. Let $\{w_l^{(i)}\}_{l=1}^{i-1}$ denote the Frank-Wolfe weights assigned to each of the previous design points $\{x_l^{\text{FW}}\}_{l=1}^{i-1}$ at this new iteration, given that we choose $x$ as our new design point. The choice of new design point is done by computing the derivative of the objective function $J(g_{i-1})$ and finding the point $x^*$ which minimises the inner product:

\begin{equation}
{\arg\min}_{g \in \mathcal{G}} \big\langle g , (DJ)(g_{i-1}) \big\rangle_{\times} 
\end{equation}

To do so, we need to obtain an equivalent expression of the minimisation of the linearisation of $J$ (denoted $DJ$) in terms of kernel values and evaluations of the mean element $\mu_p$.
Since minimisation of a linear function can be restricted to extreme points of the domain, we have that 
\begin{equation}
{\arg\min}_{g \in \mathcal{G}} \big\langle g , (DJ)(g_{i-1}) \big\rangle_{\times} 
\; = \; {\arg\min}_{x \in \mathcal{X}} \big\langle \Phi(x) , (DJ)(g_{i-1}) \big\rangle_{\times}.
\end{equation}
Then using the definition of $J$ we have:
\begin{equation}
{\arg\min}_{x \in \mathcal{X}} \big\langle \Phi(x) , (DJ)(g_{i-1}) \big\rangle_{\times} 
\; = \; {\arg\min}_{x \in \mathcal{X}} \big\langle \Phi(x) , g_{i-1} - \mu_p \big\rangle_{\mathcal{H}} ,
\end{equation}
where
\begin{equation}
\begin{split}
\big\langle \Phi(x) , g_{i-1} - \mu_p \big\rangle_{\mathcal{H}} 
\quad & = \quad
\Big\langle \Phi(x), \sum_{l=1}^{i-1} w_l^{(i-1)} \Phi(x_l) - \mu_p \Big\rangle_{\mathcal{H}} \\
& = \quad
\sum_{i=1}^{i-1} w_l^{(i-1)} \big\langle   \Phi(x) , \Phi(x_l) \big\rangle_{\mathcal{H}} - \big\langle \Phi(x) ,  \mu_p \big\rangle_{\mathcal{H}} \\
& = \quad \sum_{l=1}^{i-1} w_l^{(i-1)} k( x,x_l) - \mu_p(x).
\end{split}
\end{equation}

Our new design point $x_i^{\text{FW}}$ is therefore the point $x^*$ which minimises this expression. Note that this equation may not be convex and may require us to make use of approximate methods to find the minimum $x^*$. To do so, we sample $M$ points (where $M$ is large) independently from the distribution $p$ and pick the sample which minimises the expression above.
From \cite{Lacoste-Julien2015} this introduces an additive error term of size $\mathcal{O}(M^{-1/4})$, which does not impact our convergence analysis provided that M(n) vanishes sufficiently quickly.
In all experiments we took $M$ between $10,000$ and $50,000$ so that this error will be negligible.

It is important to note that sampling from $p(x)$ is likely to not be the best solution to optimising this expression. One may, for example, be better off using any other optimisation method which does not require convexity (for example, Bayesian Optimization). However, we have used sampling as the result from \cite{Lacoste-Julien2015} discussed above allows us to have a theoretical upper bound on the error introduced.


\noindent {\it Step 2) Computing the Step-Sizes and Weights for the Frank-Wolfe and Frank-Wolfe Line-Search Algorithms.}

Computing the weights $\{w_l^{(i)}\}_{l=1}^n$ assigned by the FW/FWLS algorithms to each of the design points is obtained using the equation:
\begin{equation}
w_l^{(i)}=   \prod_{j=l+1}^{i} \big( 1 - \rho_{j-1} \big) \rho_{l-1} 
\end{equation}
Clearly, this expression depends on the choice of step-sizes $\{\rho_l\}_{l=1}^i$. In the case of the standard Frank-Wolfe algorithm, this step-size sequence is a an input from the algorithm and so computing the weights is straightforward. However, in the case of the Frank-Wolfe Line-Search algorithm, the choice of step-size is optimized at each iteration so
 that $g_i$ minimises $J$ the most.
 
In the case of computing integrals, this optimization step can actually be obtained analytically. This analytic expression will be given in terms of values of the kernel values and evaluations of the mean element. 

First, from the definition of $J$
\begin{equation}
\begin{split}
J \big( (1-\rho) g_{i-1} + \rho \Phi(x_{i}) \big) 
 \quad & = \quad \frac{1}{2} \big\langle (1-\rho) g_{i-1} + \rho \Phi(x_{i}) - \mu_p , (1-\rho) g_{i-1} + \rho \Phi(x_{i}) - \mu_p \big\rangle_{\mathcal{H}} \\
& = \quad \frac{1}{2} \Big[ (1- \rho)^2 \big\langle g_{i-1},g_{i-1} \big\rangle_{\mathcal{H}} + 2 (1 - \rho) \rho \big\langle g_{i-1}, \Phi(x_{i}) \big\rangle_{\mathcal{H}} \\ 
&  \qquad + 2 \rho^2 \big\langle \Phi(x_{i}), \Phi(x_{i}) \big\rangle_{\mathcal{H}}  - 2 (1 - \rho) \big\langle g_{i-1}, \mu_p \big\rangle_{\mathcal{H}} \\
&  \qquad  - 2\rho \big\langle \Phi(x_{i}), \mu_p \big\rangle_{\mathcal{H}} + \big\langle \mu_p,\mu_p \big\rangle_{\mathcal{H}} \Big]. 
\end{split}
\end{equation}
Taking the derivative of this expression with respect to $\rho$, we get:
\begin{equation}
\begin{split}
\frac{\partial J \big( (1-\rho) g_{i-1} + \rho \Phi(x_{i}) \big)  }{\partial \rho}  
\quad & = \quad \frac{1}{2} \Big[ - 2(1- \rho) \big\langle g_{i-1} , g_{i-1} \big\rangle_{\mathcal{H}} + 2 (1 - 2 \rho) \big\langle g_{i-1}, \Phi(x_{i})\big\rangle_{\mathcal{H}}    \\
&  \qquad + 2 \rho \big\langle \Phi(x_{i}), \Phi(x_{i}) \big\rangle_{\mathcal{H}} + 2  \big\langle g_{i-1} , \mu_p \big\rangle_{\mathcal{H}} - 2
 \big\langle \Phi(x_{i}), \mu_p \big\rangle_{\mathcal{H}}  \Big] \\ 
&  = \quad \rho \Big[ \big\langle g_{i-1}, g_{i-1} \big\rangle_{\mathcal{H}} - 2 \big\langle g_{i-1} , \Phi(x_{i}) \big\rangle_{\mathcal{H}} + \big\langle \Phi(x_{i}) , \Phi(x_{i}) \big\rangle_{\mathcal{H}}  \\
 & = \quad \rho \big\| g_{i-1} - \Phi(x_{i}) \big\|^2_{\mathcal{H}} - \big\langle g_{i-1} - \Phi(x_{i}) , g_{i-1} - \mu_p \big\rangle_{\mathcal{H}} .\\
 \end{split}
\end{equation}

Setting this derivative to zero gives us the following optimum:
\begin{equation}
\rho^* \quad = \quad \frac{\Big\langle  g_{i-1} - \mu_p  , g_{i-1} - \Phi(x_{i}) \Big\rangle_{\mathcal{H}}}{\Big\| g_{i-1} - \Phi(x_{i}) \Big\|^2_{\mathcal{H}} }.
\end{equation}

Clearly, differentiating a second time with respect to $\rho$ gives  $\| g_{i-1} - \Phi(x_{i}) \|_{\mathcal{H}}^2$, which is non-negative and so $\rho^*$ is a minimum. One can show using geometrical arguments about the marginal polytope $\mathcal{M}$ that $\rho^*$ will be in $[0,1]$ \cite{Jaggi2013}.

The numerator of this line-search expression is
\begin{equation}
\begin{split}
\Big\langle g_{i-1} - \mu_p, g_{i-1} - \Phi(x_{i})\Big\rangle_{\mathcal{H}} \quad 
& = \quad \big\langle g_{i-1}, g_{i-1} \big\rangle_{\mathcal{H}} 
- \big\langle \mu_p, g_{i-1} \big\rangle_{\mathcal{H}} \\
& \qquad -  \sum_{l=1}^{i-1} w_l^{(i-1)} k(x_{l},x_{i})  + \mu_p(x_{i})  \\
& = \quad \sum_{l=1}^{i-1} \sum_{m=1}^{i-1} w_l^{(i-1)} w_m^{(i-1)} k(x_l,x_m) 
 \\
& \qquad - \sum_{l=1}^{i-1} w_l^{(i-1)} \Big[ k(x_l,x_{i})  + \mu_p(x_l) \Big] + \mu_p(x_{i}). \\
\end{split}
\end{equation}
Similarly the denominator is
\begin{equation}
\begin{split}
\big\| g_{i-1} - \Phi(x_{i})\big\|^2_{\mathcal{H}} \quad 
& = \quad \big\langle g_{i-1} - \Phi(x_{i}), g_{i-1} - \Phi(x_{i}) \big\rangle_{\mathcal{H}} \\
& = \quad \big\langle g_{i-1}, g_{i-1} \big\rangle_{\mathcal{H}}  
- 2 \big\langle g_{i-1} , \Phi(x_{i}) \big\rangle_{\mathcal{H}} 
+ \big\langle \Phi(x_{i}), \Phi(x_{i}) \big\rangle_{\mathcal{H}} \\
& = \quad \sum_{l=1}^{i-1} \sum_{m=1}^{i-1} w_l^{(i-1)} w_m^{(i-1)} k(x_l,x_m)  
- 2 \sum_{l=1}^{i-1} w_l^{(i-1)} k(x_l,x_{i}) 
+  k(x_{i},x_{i}).\\
\end{split}
\end{equation}

Clearly all expressions provided here can be vectorised for efficient computational implementation.

\noindent {\it Step 3) Computing a new approximation of the mean element.}

The final step consists of updating the approximation of the mean element, which can be done directly by setting:
\begin{equation}
g_{i} = (1 - \rho_i) g_{i-1} + \rho_i \bar{g}_i
\end{equation}


\subsection*{Appendix B: Proofs of Theorems and Corollaries}

\begin{theorem*}[Consistency]
The posterior mean $\hat{p}_{\emph{FWBQ}}[f]$ converges to the true integral $p[f]$ at the following rates:
\begin{equation*}
\Big|p[f] - \hat{p}_{\emph{FWBQ}}[f]\Big| \leq \text{MMD}\big(\{x_i,w_i\}_{i=1}^n\big) \leq \left\{ \begin{array}{cl} \frac{2 D^2}{R}  n^{-1} & \text{for FWBQ}\\
 \sqrt{2}D \exp(-\frac{R^2}{2 D^2} n) & \text{for FWLSBQ}  \end{array} \right.
\end{equation*}
where the FWBQ uses step-size $\rho_i = 1/(i+1)$, $D \in (0,\infty)$ is the diameter of the marginal polytope $\mathcal{M}$ and $R \in (0,\infty)$ gives the radius of the smallest ball of center $\mu_p$ included in $\mathcal{M}$.
\end{theorem*}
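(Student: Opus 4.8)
The plan is to decompose the bound into two independent pieces: a generic ``projection'' inequality valid for any quadrature rule, followed by a convergence estimate for the Frank--Wolfe approximation of $\mu_p$ that exploits the geometry of the marginal polytope. For the left-hand inequality I would use the RKHS representation $p[f] = \langle f, \mu_p\rangle_{\mathcal{H}}$ together with the analogous identity $\hat{p}_{\emph{FWBQ}}[f] = \langle f, \mu_{\hat{p}}\rangle_{\mathcal{H}}$ for the empirical mean element $\mu_{\hat{p}} = \sum_i w_i^{\text{BQ}} \Phi(x_i^{\text{FW}})$. Cauchy--Schwarz then gives $|p[f] - \hat{p}_{\emph{FWBQ}}[f]| = |\langle f, \mu_p - \mu_{\hat{p}}\rangle_{\mathcal{H}}| \le \|f\|_{\mathcal{H}} \|\mu_p - \mu_{\hat{p}}\|_{\mathcal{H}}$, and by the identity $\text{MMD} = \|\mu_p - \mu_{\hat{p}}\|_{\mathcal{H}}$ recorded in the excerpt this is exactly $\|f\|_{\mathcal{H}}\cdot\text{MMD}$ (taking $\|f\|_{\mathcal{H}}\le 1$ to match the stated form).

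The crucial reduction is the second inequality, and the key observation is that replacing the Frank--Wolfe weights by the Bayesian quadrature weights can only \emph{decrease} the MMD. Indeed, Eqn.~(5) identifies the BQ posterior variance with $\text{MMD}^2(\{x_i, w_i^{\text{BQ}}\})$, and the BQ weights arise as the orthogonal projection of $\mu_p$ onto the affine hull of $\{\Phi(x_i^{\text{FW}})\}_{i=1}^n$, hence are MMD-optimal among all weightings of the fixed design points. This yields $\text{MMD}(\{x_i^{\text{FW}}, w_i^{\text{BQ}}\}) \le \text{MMD}(\{x_i^{\text{FW}}, w_i^{\text{FW}}\}) = \|\mu_p - g_n\|_{\mathcal{H}} = \sqrt{2 J(g_n)}$, where $g_n = \hat{\mu}_{\text{FW}}$ and I used $J(g) = \tfrac{1}{2}\|g - \mu_p\|^2_{\mathcal{H}}$ with minimum value $J(\mu_p) = 0$. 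It therefore suffices to control the Frank--Wolfe optimisation error $J(g_n) - J(\mu_p)$, and the whole analysis of the BQ weights has been absorbed into a single inequality.

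For the last step I would invoke the refined Frank--Wolfe convergence analysis for a quadratic objective whose minimiser $\mu_p$ lies in the interior of the compact marginal polytope $\mathcal{M}$ (bounded via $\|\Phi(x)\|_{\mathcal{H}}\le R$), as developed in \cite{Bach2012,Jaggi2013,Lacoste-Julien2015}. Positivity of the radius $R$ of the ball about $\mu_p$ contained in $\mathcal{M}$ is what upgrades the generic $O(1/n)$ objective rate (which alone would only give $\text{MMD} = O(n^{-1/2})$) to the stated rates: for the fixed step-size $\rho_i = 1/(i+1)$ one obtains the linear-in-distance bound $\|\mu_p - g_n\|_{\mathcal{H}} \le 2D^2 R^{-1} n^{-1}$, while the line-search variant contracts $J$ geometrically by a per-step factor governed by $R^2/D^2$, so that $(1-R^2/D^2) \le \exp(-R^2/D^2)$ telescopes to $\|\mu_p - g_n\|_{\mathcal{H}} \le \sqrt{2} D \exp(-\tfrac{R^2}{2D^2} n)$. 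Chaining these with the two inequalities above delivers the theorem. The main obstacle is precisely this last estimate: one must verify that the interior condition forces the Frank--Wolfe vertex $\bar{g}_i$ to lie far enough across $\mu_p$ that the directional derivative $\langle g_{i-1} - \mu_p, g_{i-1} - \bar{g}_i\rangle_{\mathcal{H}}$ is bounded below in terms of $R\|g_{i-1} - \mu_p\|_{\mathcal{H}}$, and to propagate the constants $D$ and $R$ carefully through the telescoping (or inductive) argument so that they emerge exactly as stated. Everything else is bookkeeping.
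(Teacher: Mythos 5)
Your proposal is correct and follows essentially the same route as the paper's proof: bound the integration error by the MMD via the reproducing property and Cauchy--Schwarz, observe that the BQ weights are MMD-optimal for the fixed Frank--Wolfe design points so that $\mathrm{MMD}(\{x_i^{\mathrm{FW}},w_i^{\mathrm{BQ}}\}) \le \mathrm{MMD}(\{x_i^{\mathrm{FW}},w_i^{\mathrm{FW}}\}) = \sqrt{2J(g_n)}$, and then invoke the known Frank--Wolfe convergence rates for $J(g_n)$ (the paper cites Prop.~1 of \cite{Bach2012}), whose square roots yield exactly the stated constants. The only cosmetic difference is that you sketch how the interior-ball condition drives the Frank--Wolfe rates, whereas the paper simply cites that result.
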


\begin{proof}
The posterior mean in BQ is a Bayes estimator and so the MMD takes a minimax form \cite{Huszar2012}. 
In particular, the BQ weights perform no worse than the FW weights:
\begin{equation}
\text{MMD} \Big( \big\{x_i^{\text{FW}} ,w_i^{\text{BQ}} \big\}_{i=1}^n \Big)
 \; = \; \inf_{\textrm{w} \in \mathbb{R}^n} \text{MMD} \Big( \big\{x_i^{\text{FW}},w_i \big\}_{i=1}^n \Big)  
 \; \leq \; \text{MMD} \Big( \big\{x_i^{\text{FW}},w_i^{\text{FW}} \big\}_{i=1}^n \Big). \label{eq:minimax}
\end{equation}
Now, the values attained by the objective function $J$ along the path $\{g_i\}_{i=1}^n$ determined by the FW(/FWLS) algorithm can be expressed in terms of the MMD as follows:
\begin{equation}
J(g_n) = \frac{1}{2} \big\|\hat{\mu}_{\text{FW}} - \mu_p \big\|^2_{\mathcal{H}} = \frac{1}{2} \text{MMD}^2\Big( \big\{x_i^{\text{FW}},w_i^{\text{FW}} \big\}_{i=1}^n \Big).
\label{FWobjvals}
\end{equation}
Combining \eqref{eq:minimax} and \eqref{FWobjvals} gives
\begin{equation}
\Big|p[f] - \hat{p}_{\text{FWBQ}}[f] \Big| \; \leq \; \text{MMD}\Big( \big\{x_i^{\text{FW}},w_i^{\text{BQ}} \big\}_{i=1}^n \Big) \big\|f \big\|_{\mathcal{H}} \; \leq \; 2^{1/2} J^{1/2}(g_n),
\end{equation}
since $\|f\|_{\mathcal{H}} \leq 1$. To complete the proof we leverage recent analysis of the FW algorithm with steps $\rho_i = 1/(n+1)$ and the FWLS algorithm.
Specifically, from \cite[Prop. 1]{Bach2012} we have that:
\begin{equation}
J(g_n) \leq \left\{ \begin{array}{cl} \frac{2D^4}{R^2} n^{-2} & \text{for FW with step size } \rho_i = 1/(i+1) \\
 D^2 \exp(-R^2 n/D^2 ) & \text{for FWLS} \end{array} \right.
\end{equation}
where $D$ is the diameter of the marginal polytope $\mathcal{M}$ and $R$ is the radius of the smallest ball centered at $\mu_p$ included in $\mathcal{M}$.
\end{proof}

\vspace{6mm}


\begin{theorem*}[Contraction] 
Let $S \subseteq \mathbb{R}$ be an open neighbourhood of the true integral $p[f]$ and let $\gamma = \inf_{r \in S^C} | r - p[f]| >0$.
Then the posterior probability mass on $S^c = \mathbb{R} \setminus S$ vanishes at a rate:
\begin{equation*}
\emph{prob}(S^c) \leq \left\{ \begin{array}{cl} \frac{2\sqrt{2}D^2}{\sqrt{\pi}R \gamma} n^{-1} \exp \Big(- \frac{\gamma^2 R^2}{8 D^4} n^2 \Big) 
& \text{for FWBQ, } \rho_i = 1/(i+1) \\ 
\frac{2 D}{\sqrt{\pi} \gamma} \exp\Big( - \frac{R^2}{2 D^2}  n - \frac{\gamma^2}{2\sqrt{2}D} \exp\big( \frac{R^2}{2D^2} n\big)\Big)
& \text{for FWLSBQ} \end{array} \right.
\end{equation*}
where $D \in (0,\infty)$ is the diameter of the marginal polytope $\mathcal{M}$ and $R \in (0,\infty)$ gives the radius of the smallest ball of center $\mu_p$ included in $\mathcal{M}$.
\end{theorem*}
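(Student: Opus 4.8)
The plan is to exploit the one feature that distinguishes FWBQ/FWLSBQ from a purely optimisation-based rule: by construction the output is an explicit Gaussian posterior over the integral. Writing $m_n \defeq \hat{p}_{\text{FWBQ}}[f]$ for the posterior mean and $\sigma_n^2 \defeq v_{\text{BQ}}(\{x_i^{\text{FW}}\}_{i=1}^n)$ for the posterior variance, the induced law on $p[f]$ is $\mathcal{N}(m_n,\sigma_n^2)$, and by the posterior-variance identity \eqref{eq:variance} this variance is exactly the squared worst-case error, $\sigma_n^2 = \text{MMD}^2(\{x_i^{\text{FW}},w_i^{\text{BQ}}\}_{i=1}^n)$. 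Hence $\text{prob}(S^c)$ is literally a Gaussian tail probability, and the whole argument reduces to bounding that tail and then inserting the rates of the preceding Consistency theorem for both the location $m_n$ and the scale $\sigma_n$.

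First I would convert the event into a symmetric tail about the truth. Because $S$ is open and $\gamma = \inf_{r \in S^c}|r-p[f]| > 0$, the interval $(p[f]-\gamma,\,p[f]+\gamma)$ is contained in $S$, so $S^c \subseteq \{r : |r-p[f]| \ge \gamma\}$ and therefore
\begin{equation}
\text{prob}(S^c) \;\le\; \mathbb{P}_{X \sim \mathcal{N}(m_n,\sigma_n^2)}\big(|X-p[f]| \ge \gamma\big).
\end{equation}
I would then write $X-p[f] = (X-m_n) + (m_n-p[f])$: the fluctuation $X-m_n$ is centred Gaussian with scale $\sigma_n$, while the deterministic bias satisfies $|m_n-p[f]| \le \text{MMD}(\{x_i^{\text{FW}},w_i^{\text{BQ}}\}_{i=1}^n)\,\|f\|_{\mathcal{H}} \le \sigma_n$ by the Consistency theorem (using $\|f\|_{\mathcal{H}}\le 1$ and $\sigma_n = \text{MMD}$). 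Consistency also forces $\sigma_n\to 0$, so for all large $n$ we have $\gamma > \sigma_n$ and the standardised threshold $t_n \defeq \gamma/\sigma_n$ is positive and diverges.

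The core estimate is the Gaussian (Mills-ratio) tail inequality $\mathbb{P}(|Z|\ge t) \le \sqrt{2/\pi}\,t^{-1}\exp(-t^2/2)$ for $Z\sim\mathcal{N}(0,1)$, applied at $t=t_n$. Up to the bias correction discussed below this yields
\begin{equation}
\text{prob}(S^c) \;\le\; \sqrt{\tfrac{2}{\pi}}\,\frac{\sigma_n}{\gamma}\,\exp\Big(-\frac{\gamma^2}{2\sigma_n^2}\Big).
\end{equation}
Both the prefactor $\sigma_n/\gamma$ and the exponential are increasing in $\sigma_n$, so I may substitute the Consistency upper bounds $\sigma_n \le \tfrac{2D^2}{R}n^{-1}$ (FWBQ) and $\sigma_n \le \sqrt{2}D\exp(-\tfrac{R^2}{2D^2}n)$ (FWLSBQ). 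The first substitution turns the prefactor into a $\mathcal{O}(n^{-1})$ term and the exponent into $-\tfrac{\gamma^2 R^2}{8D^4}n^2$, i.e.\ super-exponential ($\exp(-cn^2)$) contraction; the second turns the prefactor into an exponentially small factor $\exp(-\tfrac{R^2}{2D^2}n)$ and the exponent into a doubly-exponential term, giving the stated FWLSBQ rate.

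The step I expect to be the main obstacle is the careful treatment of the bias $m_n-p[f]$, since the posterior is centred at the estimate rather than at the truth. A naive triangle inequality replaces the threshold $\gamma/\sigma_n$ by the slightly smaller $(\gamma-\sigma_n)/\sigma_n$; to recover the clean stated constants one must instead bound the two one-sided tails $\{X-p[f]\ge\gamma\}$ and $\{X-p[f]\le-\gamma\}$ separately, use $|m_n-p[f]|\le\sigma_n$ symmetrically, and absorb the resulting $\gamma \gg \sigma_n$ correction into the leading constants. The remaining tasks—checking monotonicity in $\sigma_n$ so the consistency bounds can legitimately be inserted, and simplifying the algebra for each of the two rate expressions—are routine.
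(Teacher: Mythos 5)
Your proposal is correct and follows essentially the same route as the paper's own proof: identify the posterior as $\mathcal{N}(m_n,\sigma_n^2)$ with $\sigma_n = \mathrm{MMD}$, reduce $\mathrm{prob}(S^c)$ to a Gaussian tail outside $(p[f]-\gamma,\,p[f]+\gamma)$, control the bias $|m_n - p[f]| \le \sigma_n\|f\|_{\mathcal{H}} \le \sigma_n$, apply a Gaussian tail estimate (the paper uses the $\mathrm{erfc}$ asymptotic $\mathrm{erfc}(z) \sim \exp(-z^2)/(\sqrt{\pi}z)$, which is the same Mills-ratio estimate you invoke), and substitute the MMD rates from the Consistency theorem. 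If anything, you are slightly more careful than the paper about the bias term, which the paper absorbs asymptotically via a ``$\lesssim$'' step rather than tracking it through the constants.
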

\begin{proof}
We will obtain the posterior contraction rates of interest using the bounds on the MMD provided in the proof of Theorem 1. Given an open neighbourhood $S \subseteq \mathbb{R}$ of $p[f]$, we have that the complement $S^c = \mathbb{R} \setminus S$ is closed in $\mathbb{R}$.
We assume without loss of generality that $S^c \neq \emptyset$,
since the posterior mass on $S^c$ is trivially zero when $S^c = \emptyset$.
Since $S^c$ is closed, the distance $\gamma = \inf_{r \in S^c} \bigl|r - p[f]\bigr| > 0$ is strictly positive. Denote the posterior distribution by $\mathcal{N}(m_n,\sigma_n^2)$ where we have that $m_n \defeq \hat{p}_{\text{FWBQ}}[f]$ where $\hat{p}_{\text{FWBQ}} = \sum_{i=1}^n w_i^{\text{BQ}} \delta(x_i^{\text{FW}})$ and $\sigma_n \defeq \text{MMD}(\{x_i^{\text{FW}},w_i^{\text{BQ}}\}_{i=1}^n)$.
Directly from the supremum definition of the MMD we have:
\begin{equation}
\Big|p\big[f\big] - m_n \Big| \leq \sigma_n \big\|f\big\|_{\mathcal{H}}. \label{eq:CS}
\end{equation}
Now the posterior probability mass on $S^c$ is given by
\begin{equation}
M_n = \int_{S^c} \phi(r|m_n,\sigma_n) \mathrm{d}r,
\end{equation}
where $\phi(r|m_n,\sigma_n)$ is the p.d.f. of the posterior normal distribution. By the definition of $\gamma$ we get the upper bound:
\begin{eqnarray}
M_n & \leq & \int_{-\infty}^{p[f] - \gamma} \phi(r|m_n,\sigma_n) \mathrm{d}r + \int_{p[f] + \gamma}^\infty \phi(r|m_n,\sigma_n) \mathrm{d}r \\
& = & 1 + \Phi\Big(\underbrace{\frac{p[f] - m_n}{\sigma_n}}_{(*)} - \frac{\gamma}{\sigma_n}\Big) - \Phi\Big(\underbrace{\frac{p[f] - m_n}{\sigma_n}}_{(*)} + \frac{\gamma}{\sigma_n}\Big).
\end{eqnarray}
From \eqref{eq:CS} we have that the terms $(*)$ are bounded by $\|f\|_{\mathcal{H}} \leq 1 <\infty$ as $\sigma_n \rightarrow 0$, so that asymptotically we have:
\begin{eqnarray}
M_n & \lesssim & 1 + \Phi\big(- \gamma / \sigma_n\big) - \Phi\big(\gamma / \sigma_n \big) \\
& = & \text{erfc}\big(\gamma/\sqrt{2}\sigma_n\big) \sim \big(\sqrt{2}\sigma_n / \sqrt{\pi} \gamma \big) \exp\big(- \gamma^2 / 2 \sigma_n^2 \big). \label{eq:final}
\end{eqnarray}
Finally we may substitute the asymptotic results derived in the proof of Theorem 1 for the MMD $\sigma_n$ into \eqref{eq:final} to complete the proof.
\end{proof}

\vspace{6mm}

\begin{corollary*}
The consistency and contraction rates obtained for FWLSBQ apply also to OBQ.
\end{corollary*}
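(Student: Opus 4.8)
The plan is to lean entirely on the defining property of OBQ: by construction its design points $\{x_i^{\text{OBQ}}\}_{i=1}^n$ globally minimise the posterior variance, which by \eqref{eq:variance} is exactly $\text{MMD}^2$ evaluated with the BQ weights. Since FWLSBQ uses the \emph{same} BQ weights but at the particular (FWLS-selected) points $\{x_i^{\text{FW}}\}_{i=1}^n$, which are in general suboptimal, the OBQ MMD can be no larger. Writing $\sigma_n^{\text{OBQ}} \defeq \text{MMD}(\{x_i^{\text{OBQ}},w_i^{\text{BQ}}\}_{i=1}^n)$ and likewise for FWLSBQ, global optimality of OBQ over all configurations of $n$ points yields the single domination inequality
\[
\sigma_n^{\text{OBQ}} \;\leq\; \sigma_n^{\text{FWLSBQ}} \;\leq\; \sqrt{2}\,D\exp\!\Big(-\tfrac{R^2}{2D^2}n\Big),
\]
where the second inequality is the FWLSBQ bound already established in the proof of Theorem 1. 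Everything else follows from this.

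For consistency, I would combine the domination inequality with the supremum definition of the MMD exactly as in \eqref{eq:CS}, obtaining $|p[f] - \hat{p}_{\text{OBQ}}[f]| \leq \sigma_n^{\text{OBQ}}\|f\|_{\mathcal{H}} \leq \sigma_n^{\text{OBQ}}$ (using $\|f\|_{\mathcal{H}} \leq 1$), and then invoking the displayed bound. This reproduces the FWLSBQ consistency rate verbatim.

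For contraction, the key observation is that OBQ is itself an instance of BQ, so its posterior is the Gaussian $\mathcal{N}(\hat{p}_{\text{OBQ}}[f],(\sigma_n^{\text{OBQ}})^2)$. Consequently every step of the proof of Theorem 2 applies unchanged with $\sigma_n = \sigma_n^{\text{OBQ}}$, culminating in the bound \eqref{eq:final} written in terms of $\sigma_n^{\text{OBQ}}$. It then remains only to note that the right-hand side of \eqref{eq:final}, namely $\sigma \mapsto (\sqrt{2}\sigma/\sqrt{\pi}\gamma)\exp(-\gamma^2/2\sigma^2)$, is increasing in $\sigma > 0$ (its derivative is a positive multiple of $\exp(-\gamma^2/2\sigma^2)(1 + \gamma^2/\sigma^2)$), so that the domination inequality permits replacing $\sigma_n^{\text{OBQ}}$ by the FWLSBQ rate and recovering precisely the FWLSBQ contraction bound.

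The argument carries no genuine obstacle: the corollary is essentially immediate once one recognises that OBQ minimises the very quantity $\sigma_n$ that simultaneously controls both the consistency and the contraction bounds. The only point requiring a moment's care is the monotonicity of the contraction bound in $\sigma_n$, which guarantees that a smaller OBQ MMD translates into a rate that is at least as fast; this is a one-line derivative computation as indicated above.
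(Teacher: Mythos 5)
Your proof is correct and follows essentially the same route as the paper's: the single domination inequality $\text{MMD}(\{x_i^{\text{OBQ}},w_i^{\text{BQ}}\}) \leq \text{MMD}(\{x_i^{\text{FW}},w_i^{\text{BQ}}\})$, obtained from OBQ's global minimisation of the MMD over design points, is inserted into the proofs of Theorems 1 and 2. Your explicit check that the contraction bound is monotone increasing in $\sigma_n$ is a detail the paper leaves implicit in the phrase ``follow from inserting this inequality into the above proofs,'' and it is a worthwhile addition rather than a deviation.
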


\begin{proof}
By definition, OBQ chooses samples that globally minimise the MMD and we can hence bound this quantity from above by the MMD of FWLSBQ:
\begin{equation}
\text{MMD}\Big(\big\{x_i^{\text{OBQ}},w_i^{\text{BQ}}\big\}_{i=1}^n \Big) 
= \inf_{\{x_i\}_{i=1}^n \in \mathcal{X}} \text{MMD}\Big(\big\{x_i,w_i^{\text{BQ}}\big\}_{i=1}^n \Big)  \leq  \text{MMD}\Big( \big\{x_i^{\text{FW}},w_i^{\text{BQ}} \big\}_{i=1}^n \Big).
\end{equation}
Consistency and contraction follow from inserting this inequality into the above proofs.
\end{proof}


\subsection*{Appendix C: Computing the Mean Element for the Simulation Study}

We compute an expression for $\mu_p(x) = \int_{- \infty}^{\infty} k(x,x') p(x') \mathrm{d}x'$ in the case where $k$ is an exponentiated-quadratic kernel with length scale hyper-parameter $\sigma$:
\begin{equation}
k \big(x,x' \big)
 \; := \; \lambda^2 \exp \Big( \frac{-  \sum_{i=1}^d (x_{i} - x_{i}')^2  }{2 \sigma^2 }  \Big) 
 \; = \;  \lambda^2 (\sqrt{2 \pi} \sigma)^d  \phi \big(x \big| x',  \Sigma_{\sigma} \big),
\end{equation}
where $\Sigma_\sigma$ is a d-dimensional diagonal matrix with entries $\sigma^2$, and where $p(x)$ is a mixture of d-dimensional Gaussian distributions:
\begin{equation}
p(x) \quad = \quad \sum_{l=1}^L \rho_l \hspace{1mm} \phi \big(x\big|\mu_l,\Sigma_l \big).
\end{equation}
(Note that, in this section only, $x_i$ denotes the $i$th component of the vector $x$.)
Using properties of Gaussian distributions (see Appendix A.2 of \cite{Rasmussen2006}) we obtain
\begin{equation}
\begin{split}
\mu_p(x) \quad 
& = \quad \int_{- \infty}^{\infty} k(x,x') p(x') \mathrm{d}x' \\
& = \quad \int_{- \infty}^{\infty} \lambda^2 (\sqrt{2 \pi} \sigma)^d \phi\big(x' \big| x,  \Sigma_{\sigma} \big) \times \Big( \sum_{l=1}^L \rho_l \hspace{1mm} \phi\big(x'\big|\mu_l,\Sigma_l \big)\Big) \mathrm{d}x' \\
& = \quad \lambda^2 (\sqrt{2 \pi} \sigma)^d \sum_{l=1}^L \rho_l \int_{- \infty}^{\infty} \phi\big(x' \big| x,  \Sigma_{\sigma} \big) \times  \phi\big(x'\big|\mu_l,\Sigma_l \big) \mathrm{d}x' \\
& = \quad \lambda^2 (\sqrt{2 \pi} \sigma)^d  \sum_{l=1}^L \rho_l \int_{- \infty}^{\infty} a_l^{-1} \phi\big(x'\big|c_l,C_l \big) \mathrm{d}x' \\
& = \quad \lambda^2  (\sqrt{2 \pi} \sigma)^d \sum_{l=1}^L \rho_l a_l^{-1} .\\
\end{split}
\end{equation}
where we have:
\begin{equation}
a_l^{-1} \; = \; (2 \pi)^{-\frac{d}{2}} \big| \Sigma_{\sigma} + \Sigma_{l} \big|^{-\frac{1}{2}} \exp \big( -\frac{1}{2} \big(x - \mu_l \big)^T \big( \Sigma_{\sigma} + \Sigma_{l} \big)^{-1} \big(x - \mu_l \big) \big).
\end{equation}
This last expression is in fact itself a Gaussian distribution with probability density function $\phi(x|\mu_l, \Sigma_l + \Sigma_\sigma)$ and we hence obtain:
\begin{equation}
\mu_p(x) \quad := \quad \lambda^2 \big(\sqrt{2 \pi} \sigma \big)^d \sum_{l=1}^L \rho_l \text{ } \phi\big(x|\mu_l, \Sigma_l + \Sigma_\sigma\big).
\end{equation}
Finally, we once again use properties of Gaussians to obtain
\begin{equation}
\begin{split}
 \int_{- \infty}^{\infty} \mu_p(x) p(x) \mathrm{d}x \quad  
& = \quad \int_{- \infty}^{\infty} \Big[ \lambda^2 \big(\sqrt{2 \pi} \sigma \big)^d \sum_{l=1}^L \rho_l \text{ } \phi\big(x|\mu_l, \Sigma_l + \Sigma_\sigma\big) \Big] \\
& \quad \times \Big[ \sum_{m=1}^L \rho_m \hspace{1mm} \phi\big(x\big|\mu_m,\Sigma_m \big) \Big] \mathrm{d}x \\
& = \quad \lambda^2 \big(\sqrt{2 \pi} \sigma \big)^d \sum_{l=1}^L \sum_{m=1}^L \rho_l \rho_m \int_{- \infty}^{\infty} \phi\big(x|\mu_l, \Sigma_l + \Sigma_\sigma\big) \phi\big(x\big|\mu_m,\Sigma_m \big) \mathrm{d}x \\
& = \quad \lambda^2 \big(\sqrt{2 \pi} \sigma \big)^d \sum_{l=1}^L \sum_{m=1}^L \rho_l \rho_m a_{lm}^{-1} \\
& = \quad \lambda^2 \big(\sqrt{2 \pi} \sigma \big)^d \sum_{l=1}^L \sum_{m=1}^L \rho_l \rho_m \phi\big(\mu_l|\mu_m,\Sigma_l+\Sigma_m+\Sigma_{\sigma} \big).
\end{split}
\end{equation}
Other combinations of kernel $k$ and density $p$ that give rise to an analytic mean element can be found in the references of \cite{Bach2015}.

\subsection*{Appendix D: Details of the Application to Proteomics Data}

\noindent {\it Description of the Model Choice Problem}

The `CheMA' methodology described in \cite{Oates2014} contains several elements that we do not attempt to reproduce in full here; in particular we do not attempt to provide a detailed motivation for the mathematical forms presented below, as this requires elements from molecular chemistry.
For our present purposes it will be sufficient to define the statistical models $\{M_i\}_{i=1}^m$ and to clearly specify the integration problems that are to be solved.
We refer the reader to \cite{Oates2014} and the accompanying supplementary materials for a full biological background.

Denote by $\mathcal{D}$ the dataset containing normalised measured expression levels $y_S(t_j)$ and $y_S^*(t_j)$ for, respectively, the unphosphorylated and phosphorylated forms of a protein of interest (`substrate') in a longitudinal experiment at time $t_j$. 
In addition $\mathcal{D}$ contains normalised measured expression levels $y_{E_i}^*(t_j)$ for a set of possible regulator kinases (`enzymes', here phosphorylated proteins) that we denote by $\{E_i\}$.

An important scientific goal is to identify the roles of enzymes (or `kinases') in protein signaling; in this case the problem takes the form of variable selection and we are interested to discover which enzymes must be included in a model for regulation of the substrate $S$.
Specifically, a candidate model $M_i$ specifies which enzymes in the set $\{E_i\}$ are regulators of the substrate $S$, for example $M_3 = \{E_2,E_4\}$.
Following \cite{Oates2014} we consider models containing at most two enzymes, as well as the model containing no enzymes.

Given a dataset $\mathcal{D}$ and model $M_i$, we can write down a likelihood function as follows:
\begin{eqnarray}
L(\theta_i,M_i) = \prod_{n=1}^N \phi\left( \frac{y_S^*(t_{n+1}) - y_S^*(t_n)}{t_{n+1} - t_n} \left|  \frac{-V_0 y_S^*(t_n)}{y_S^*(t_n) + K_0} + \sum_{E_j \in M_i} \frac{V_j y_{E_j}^*(t_n) y_S(t_n)}{y_S(t_n) + K_j} , \sigma_{\text{err}}^2  \right. \right). \label{chemalike}
\end{eqnarray}
Here the model parameters are $\theta_i = \{\mathrm{K} , \mathrm{V}, \sigma_{\text{err}} \}$, where $(\mathrm{K})_j = K_j$, $(\mathrm{V})_j = V_j$, $\phi$ is the normal p.d.f. and the mathematical forms arise from the Michaelis-Menten theory of enzyme kinetics.
The $V_j$ are known as `maximum reaction rates' and the $K_j$ are known as `Michaelis-Menten parameters'. This is classical chemical notation, not to be confused with the kernel matrix from the main text.
The final parameter $\sigma_{\text{err}}$ defines the error magnitude for this `approximate gradient-matching' statistical model.

The prior specification proposed in \cite{Oates2014} and followed here is
\begin{eqnarray}
\mathrm{K} & \sim & \phi_T \big(K \big| 1, 2^{-1} \mathrm{I} \big), \\
\sigma_{\text{err}} | \mathrm{K} & \sim & p(\sigma_{\text{err}}) \propto 1/\sigma_{\text{err}}, \\
\mathrm{V} | \mathrm{K},\sigma & \sim & \phi_T \big(V \big| 1, N \sigma_{\text{err}}^2 \big(\mathrm{X}(\mathrm{K})^T\mathrm{X}(\mathrm{K})\big)^{-1} \big),
\end{eqnarray}
where $\phi_T$ denotes a Gaussian distribution, truncated so that its support is $[0,\infty)$ (since kinetic parameters cannot be non-negative).
Here $\mathrm{X}(\mathrm{K})$ is the design matrix associated with the linear regression that is obtained by treating the $\mathrm{K}$ as known constants; we refer to \cite{Oates2014} for further details.

Due to its careful design, the likelihood in Eqn. \ref{chemalike} is partially conjugate, so the following integral can be evaluated in closed form:
\begin{equation}
L(\mathrm{K},M_i) = \int_{0}^{\infty} \int_{0}^{\infty} L(\theta_i,M_i) p(\mathrm{V},\sigma_{\text{err}} | \mathrm{K}) \mathrm{d}\mathrm{V} \mathrm{d}\sigma_{\text{err}}.
\end{equation}
The numerical challenge is then to compute the integral
\begin{equation}
L(M_i) = \int_{0}^{\infty} L(\mathrm{K},M_i)  p(\mathrm{K}) \mathrm{d}\mathrm{K},
\end{equation}
for each candidate model $M_i$.
Depending on the number of enzymes in model $M_i$, this will either be a 1-, 2- or 3-dimensional numerical integral.
Whilst such integrals are not challenging to compute on a per-individual basis, the nature of the application means that the values $L(M_i)$ will be similar for many candidate models and, when the number of models is large, this demands either a very precise calculation per model or a careful quantification of the impact of numerical error on the subsequent inferences (i.e. determining the MAP estimate). It is this particular issue that motivates the use of probabilistic numerical methods.

\noindent {\it Description of the Computational Problem}

We need to compute integrals of functions with domain $\mathcal{X} = [0,\infty)^d$ where $d \in \{1,2,3\}$ and the sampling distribution $p(x)$ takes the form $\phi_T(x | 1,2^{-1} \mathrm{I})$.
The test function $f(x)$ corresponds to $L(\mathrm{K},M_i)$ with $x = \mathrm{K}$.
This is given explicitly by the $g$-prior formulae as:
\begin{eqnarray}
L(\mathrm{K},M_i) & = & \frac{1}{(2 \pi)^{N/2}} \frac{1}{(N+1)^{d/2}} \Gamma\left(\frac{N}{2}\right) b_N^{-\frac{N}{2}}, \\
b_N & = & \frac{1}{2} \left( \mathrm{Y}^T\mathrm{Y} + \frac{1}{N} 1^T \mathrm{X}^T\mathrm{X} 1 - \mathrm{V}_N^T \mathrm{\Omega}_N \mathrm{V}_N \right), \\
\mathrm{V}_N & = & \mathrm{\Omega}_N^{-1} \left( \frac{1}{N} \mathrm{X}^T\mathrm{X} 1 + \mathrm{X}^T\mathrm{Y} \right), \\
\mathrm{\Omega}_N & = & \left(1 + \frac{1}{N} \right) \mathrm{X}^T \mathrm{X}, \\
(\mathrm{Y})_n & = &  \frac{y_S^*(t_{n+1}) - y_S^*(t_n)}{t_{n+1} - t_n}, \\
\end{eqnarray}
where for clarity we have suppressed the dependence of $\mathrm{X}$ on $\mathrm{K}$.
For the Frank-Wolfe Bayesian Quadrature algorithm, we require that the mean element $\mu_p$ is analytically tractable and for this reason we employed the exponentiated-quadratic kernel with length scale $\lambda$ and width scale $\sigma$ parameters:
\begin{equation}
k(x,x') = \lambda^2 \exp\left(- \frac{\sum_{i=1}^d (x_i - x_i')^2}{2 \sigma^2}\right).
\end{equation}
For simplicity we focussed on the single hyper-parameter pair $\lambda = \sigma = 1$, which produces:
\begin{eqnarray}
\mu_p(x) & = & \int_{0}^{\infty} k(x,x') p(x') \mathrm{d}x' \\
& = & \int_{0}^{\infty} \exp\left(-\sum_{i=1}^d (x_i - x_i')^2\right) \phi_T \big(x' \big|1,2^{-1}\mathrm{I} \big) \mathrm{d}x' \\
& = & 2^{-d/2} \big(1 + \text{erf}(1) \big)^{-d} \prod_{i=1}^d \exp\left(-\frac{(x_i-1)^2}{2}\right) \left(1 + \text{erf}\left(\frac{x_i + 1}{\sqrt{2}}\right)\right),
\end{eqnarray}
where $\phi_T$ is the p.d.f. of the truncated Gaussian distribution introduced above and $\text{erf}$ is the error function.
To compute the posterior variance of the numerical error we also require the quantity: 
\begin{equation}
\int_{0}^{\infty} \int_{0}^{\infty} k(x,x') p(x) p(x') \mathrm{d}x \mathrm{d}x' = \int_{0}^{\infty} \mu_p(x) p(x) \mathrm{d}x = \left\{ \begin{array}{cl} 0.629907... & \text{for } d = 1 \\ 0.396783... & \text{for } d = 2 \\ 0.249937... & \text{for } d = 3 \end{array} \right. ,
\end{equation}
which we have simply evaluated numerically.
We emphasise that principled approaches to hyper-parameter elicitation are an important open research problem that we aim to address in a future publication (see discussion in the main text). The values used here are scientifically reasonable and serve to illustrate key aspects of our methodology.

FWBQ provides posterior distributions over the numerical uncertainty in each of our estimates for the marginal likelihoods $L(M_i)$.
In order to propagate this uncertainty forward into a posterior distribution over posterior model probabilities (see Figs. 3 in the main text
and \ref{fig:model posteriors2} below), we simply sampled values $\hat{L}(M_i)$ from each of the posterior distributions for $L(M_i)$ and used these samples values to construct posterior model probabilities $\hat{L}(M_i) / \sum_j \hat{L}(M_j)$.
Repeating this procedure many times enables us to sample from the posterior distribution over the posterior model probabilities (i.e. two levels of Bayes' theorem).
This provides a principled quantification of the uncertainty due to numerical error in the output of our primary Bayesian analysis.

\noindent {\it Description of the Data}

The proteomic dataset $\mathcal{D}$ that we considered here was a subset of the larger dataset provided in \cite{Oates2014}.
Specifically, the substrate $S$ was the well-studied 4E-binding protein 1 (4EBP1) and the enzymes $E_j$ consisted of a collection of key proteins that are thought to be connected with 4EBP1 regulation, or at least involved in similar regulatory processes within cellular signalling.
Full details, including experimental protocols, data normalisation and the specific choice of measurement time points are provided in the supplementary materials associated with \cite{Oates2014}.

For this particular problem, biological interest arises because the data-generating system was provided by breast cancer cell lines.
As such, the textbook description of 4EBP1 regulation may not be valid and indeed it is thought that 4EBP1 dis-regulation is a major contributing factor to these complex diseases (see \cite{Weinberg2006}).
We do not elaborate further on the scientific rationale for model-based proteomics in this work.

\begin{figure}[h]
  \centering
    \includegraphics[width=0.49\textwidth]{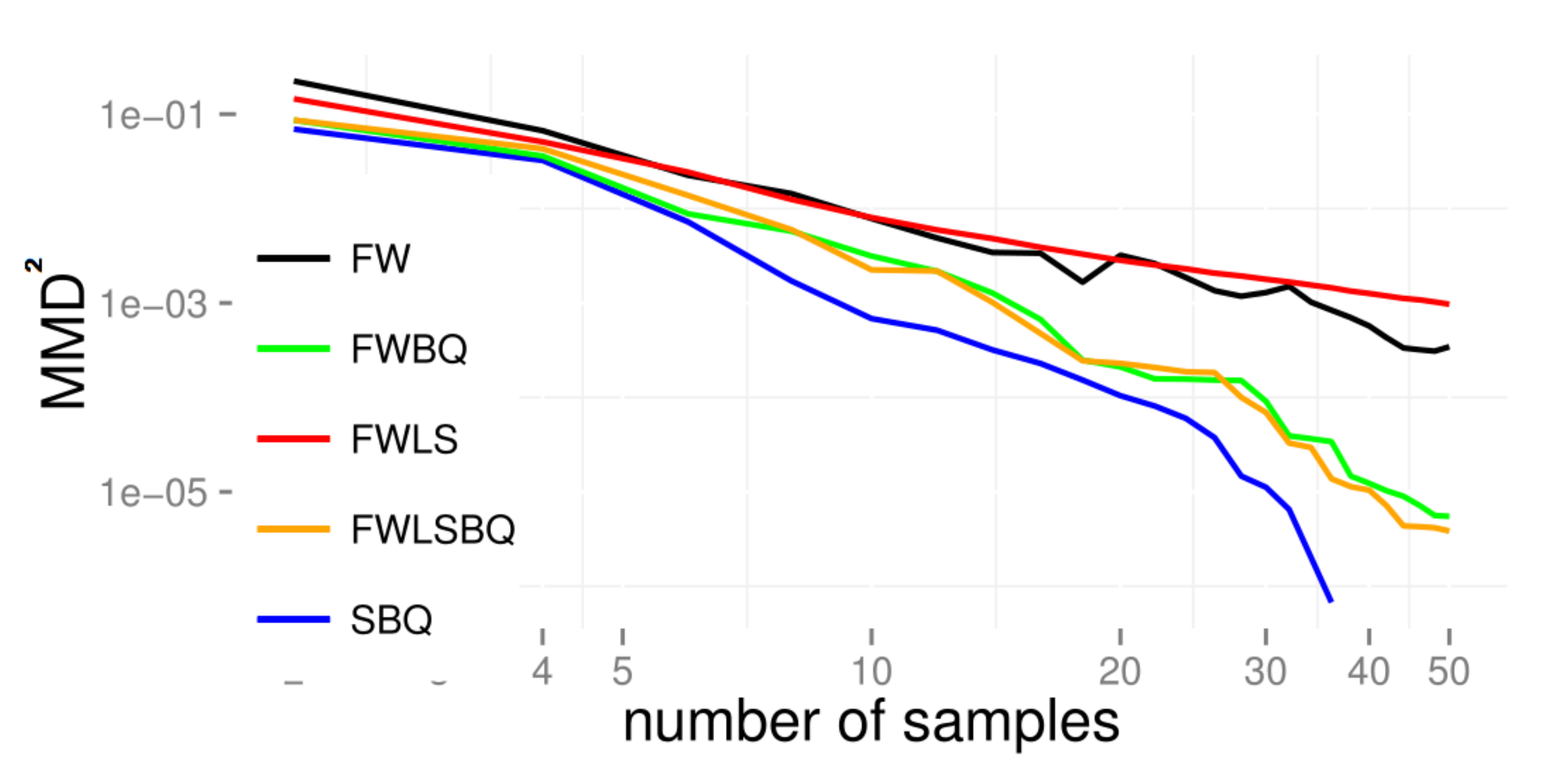}
    \includegraphics[width=0.49\textwidth,clip,trim = 0.3cm 0.32cm 0.2cm 0.3cm]{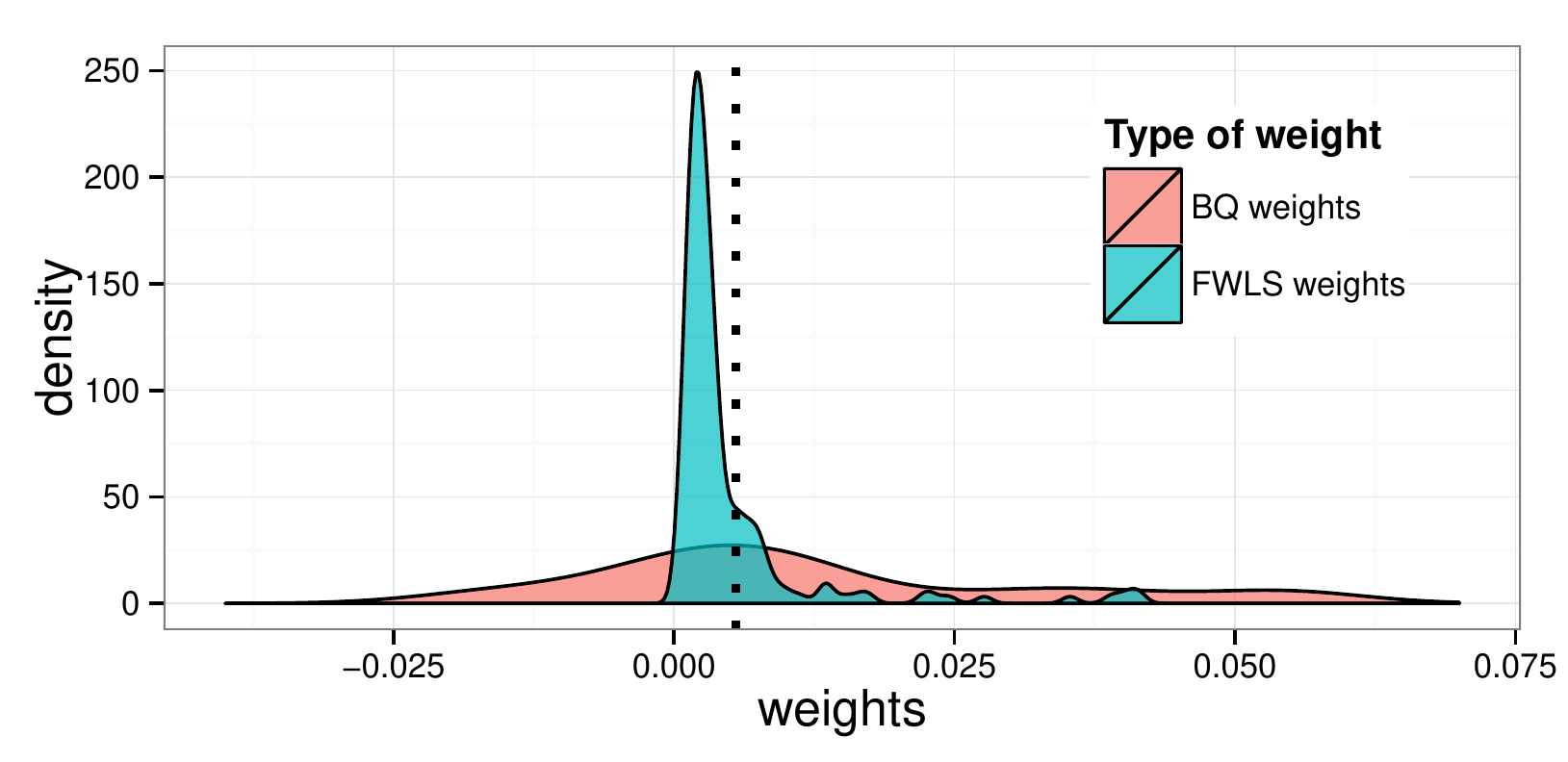}
  \caption{ Comparison of quadrature methods on the proteomics dataset. \textit{Left:} Value of the MMD$^2$ for FW (black), FWLS (red), FWBQ (green), FWLSBQ (orange) and SBQ (blue). Once again, we see the clear improvement of using Bayesian Quadrature weights and we see that Sequential Bayesian Quadrature improves on Frank-Wolfe Bayesian Quadrature and Frank-Wolfe Line-Search Bayesian Quadrature. \textit{Right:} Empirical distribution of weights. The dotted line represent the weights of the Frank-Wolfe algorithm with line search, which has all weights $w_i=1/n$. Note that the distribution of Bayesian Quadrature weights ranges from $-17.39$ to $13.75$ whereas all versions of Frank-Wolfe have weights limited to $[0,1]$ and have to sum to $1$.}
\label{fig:proteinsignalling}
\end{figure}

\begin{figure}[h]
\centering
\includegraphics[width = 0.48\textwidth]{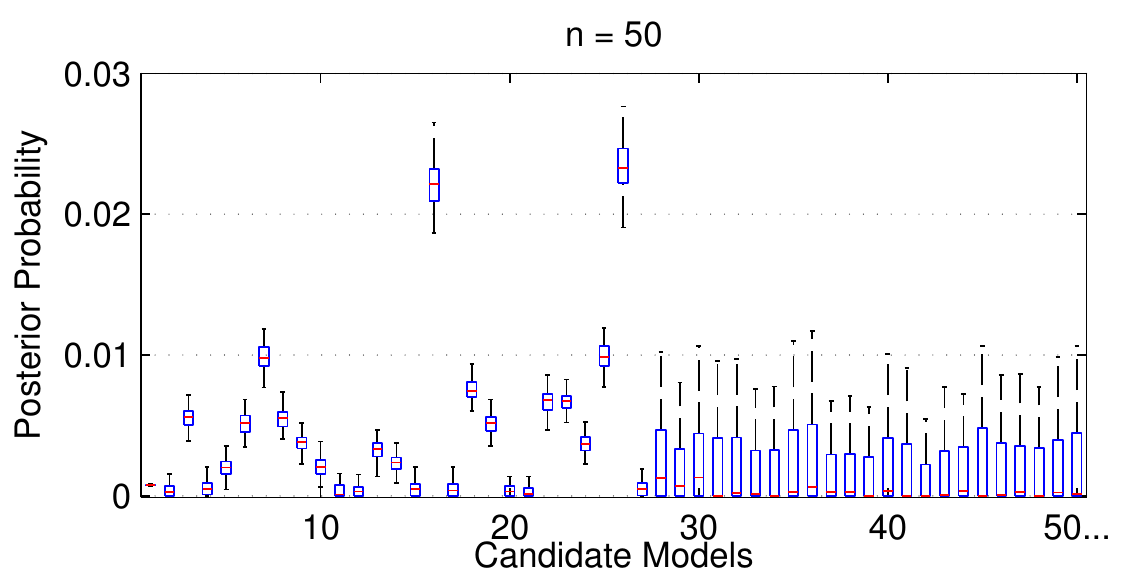}
\includegraphics[width = 0.48\textwidth]{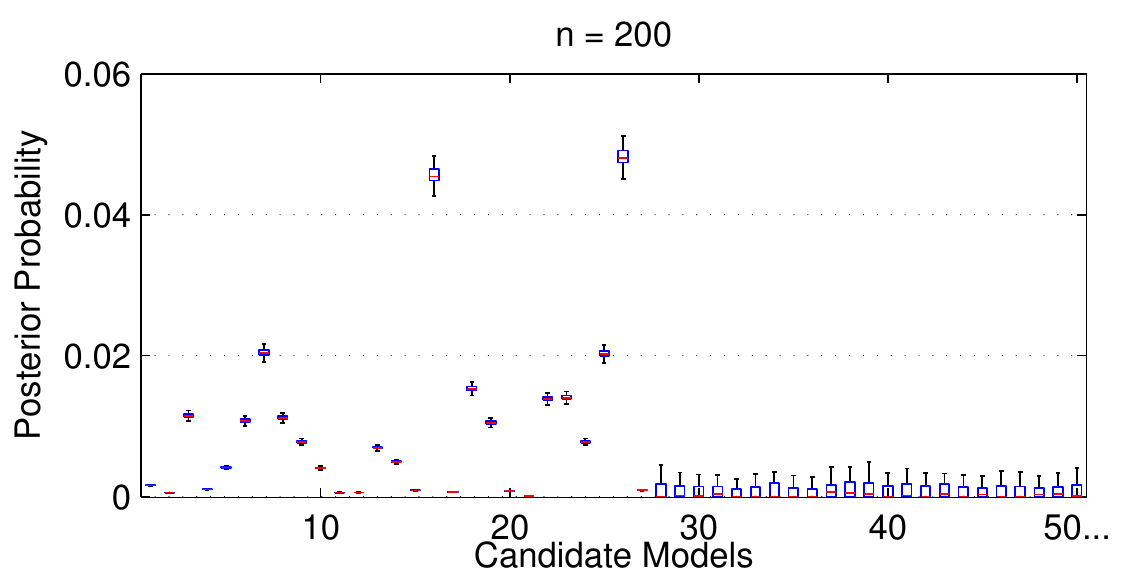}
\caption{Quantifying numerical error in a model selection problem.
Marginalisation of model parameters necessitates numerical integration and any error in this computation will introduce error into the reported posterior distribution over models.
Here FWBQ is used to model this numerical error explicitly.
{\it Left}: At $n=50$ design points the uncertainty due to numerical error prevents us from determining the true MAP estimate.
{\it Right}: At $n = 200$ design points, models 16 and 26 can be better distinguished as the uncertainty due to numerical error is reduced (model 26 can be seen to be the MAP estimate, although some uncertainty about this still remains even at this value of $n$, due to numerical error).}
\label{fig:model posteriors2}
\end{figure}


\newpage

\subsection*{Appendix E: FWBQ algorithms with Random Fourier Features}

In this section, we will investigate the use of random Fourier features (introduced in \cite{Rahimi2007}) for the FWLS and FWLSBQ algorithms. An advantage of using this type of approximation is that the cost of manipulating the Gram matrix, and in particular of inverting it, goes down from $\mathcal{O}(n^3)$ to $\mathcal{O}(nD^2)$ for some user-defined constant $D$ which controls the quality of approximation. This could make Bayesian Quadrature more competitive against other integration methods such as MCMC or QMC.  Furthermore, the kernels obtained using this method lead to finite-dimensional RKHS, which therefore satisfy the assumptions required for the theory in this paper to hold. This will be the aspect that we will focus on. In particular, we will show empirically that exponential convergence may be possible even when the RKHS is infinite-dimensional.

We will re-use the $20$-component mixture of Gaussians example with $d=2$ from our simulation studies, but using instead a random Fourier approximation of the exponentiated-quadratic (EQ) kernel $k(x,x'):= \lambda^2\exp(-1/2\sigma^2\|x-x'\|_2^2)$ with $(\lambda,\sigma)=(1,0.8)$ and $M=10000$.

Following Bochner's theorem, we can always express translation invariant kernels in Fourier space:
\begin{equation}
k(x,x') = \int_\mathcal{W} g(w) \exp\big(jw(x-x')\big)\mathrm{d}w 
= \mathbb{E}\Big[\exp\big(jw^Tx \big)\exp\big(jw^Tx' \big)\Big]
\end{equation}
where $w \sim g(w) $ for $g(w)$ being the Fourier transform of the kernel. One can then use a Monte Carlo approximation of the kernel's Fourier expression with $D$ samples whenever $g$ is a p.d.f.. Our approximated kernel will then lead to a $D$-dimensional RKHS and will be given by:
\begin{equation}
k(x,x') \approx \frac{1}{D} \sum_{j=1}^D z_{w_j,b_j}(x)z_{w_j,b_j}(x') = \hat{k}_D(x,x')
\end{equation}
where $z_{w_j,b_j}(x)=\sqrt{2}\cos(w_j^Tx+b_j)$ and $b_j \sim [0,2\pi]$ uniformly. Random Fourier features approximations are unbiased and, in the specific case of a $d$-dimensional EQ kernel with $\lambda=1$, we have to samples from the following Fourier transform: 
\begin{equation}
g(w) = \Big(\frac{2\pi}{\sigma^2}\Big)^{-\frac{d}{2}} \exp \Big(-\frac{\sigma^2\|w\|_2^2}{2} \Big)
\end{equation}
which is a $d$-dimensional Gaussian distribution with zero mean and covariance matrix with all diagonal elements equal to $(1/\sigma^2)$.

The impact on the MMD from the use of random Fourier features to approximate the kernel for both the FWLS and FWLSBQ algorithms is demonstrated in Figure \ref{fig:RFF_MMD}. In this example, the quadrature rule uses the kernel with random features but the MMD is calculated using the original $\mathcal{H}$-norm. The reason for using this $\mathcal{H}$-norm is to have a unique measure of distance between points which can be compared. 

Clearly, we once again have that the rate of convergence of the FWLSBQ is much faster than FWLS when using the exact kernel. The same phenomena is observed for the method with high number of random features ($D=5000$). This suggests that both the choice of design points and the calculation of the BQ weights is not strongly influenced by the approximation. It is also interesting to notice that the rates of convergence is very close for the exact and $D=5000$ methods (atleast when $n$ is small), potentially suggesting that exponential convergence is possible for the exact method. This is not so surprising in itself since using a Gaussian kernel represents a prior belief that the integrand of interest is very smooth, and we can therefore expect fast convergence of the method.

However, in the case with a smaller number of random features is used ($D=1000$), we actually observe a very poor performance of the method, which is mainly due to the fact that the weights are not well approximated anymore. 

In summary, the experiments in this section suggest that the use of random features is a potential alternative for scaling up Bayesian Quadrature, but that one needs to be careful to use a high enough number of features. The experiments also give hope of having very similar convergence for infinite-dimensional and finite-dimensional spaces.

\begin{figure}[ht]
\centering
\includegraphics[width = 0.9\textwidth]{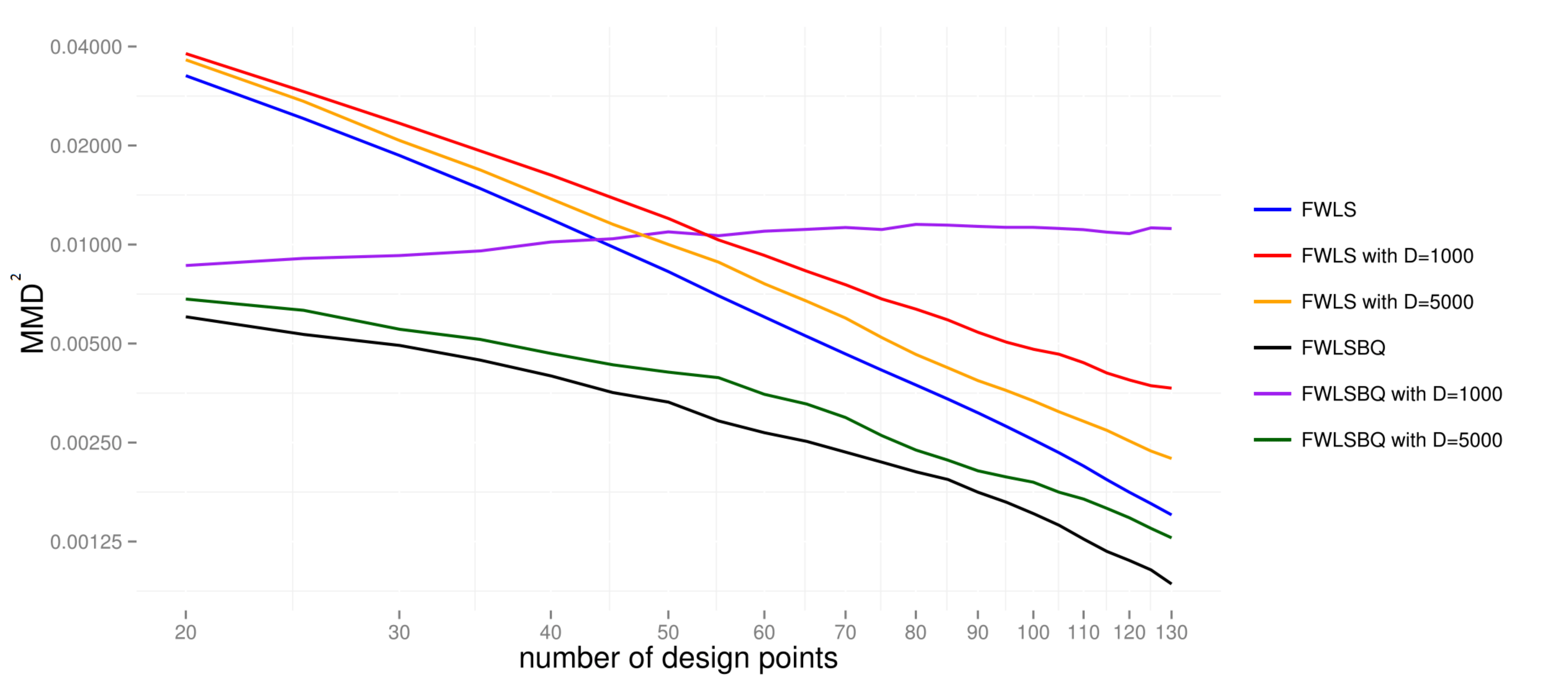}
\caption{Random Fourier Features (RFF) for Bayesian Quadrature. RFF are used to approximate the EQ kernel in the example of the simulation study. The MMD$^2$ is plotted in the case where the EQ kernel is used (FWLS: blue; FWLSBQ: black), as well as when a using random features with $D=1000$ (FWLS: red; FWLSBQ: purple) and $D=5000$ (FWLS: orange; FWLSBQ: green).}
\label{fig:RFF_MMD}
\end{figure}


\end{document}